\documentclass[runningheads]{llncs}

\usepackage[mobile]{eccv}

\usepackage{eccvabbrv}

\usepackage{graphicx}
\usepackage{booktabs}

\usepackage[accsupp]{axessibility}  

\usepackage[utf8]{inputenc}
\usepackage{geometry}
\usepackage{amsmath}
\usepackage{amssymb}
\usepackage{booktabs}
\usepackage{tabularx}
\usepackage[table]{xcolor}
\usepackage{bbm}
\usepackage{dsfont}
\usepackage{multirow}
\usepackage{graphicx}
\usepackage{makecell}
\usepackage{pifont}

\usepackage{amssymb}
\usepackage{graphicx}

\newcommand{\std}[1]{\,{\raisebox{0pt}{\scalebox{0.7}{$\pm #1$}}}}
\newcommand{\cmark}{\scalebox{1.2}{\ding{51}}}
\newcommand{\xmark}{\scalebox{1.2}{\ding{55}}}
\usepackage{threeparttable}

\usepackage{mathtools}          
\newtheorem{assumption}{Assumption}
\newcommand{\UI}{\mathrm{UI}}
\newcommand{\RI}{\mathrm{RI}}
\newcommand{\SI}{\mathrm{SI}}
\newcommand{\drop}{\mathrm{drop}}
\newcommand{\swap}{\mathrm{swap}}
\newcommand{\indep}{\mathrel{\perp\!\!\!\perp}}
\newcommand{\bbot}{\bot}

\usepackage{hyperref}

\usepackage{orcidlink}

\begin{document}

\title{BrainFIBRE: A Foundation Model via Information Decomposition for Brain Microstructure} 

\titlerunning{BrainFIBRE}

\author{Zijian Dong* \and
Yi Lin* \and
Fang Ji* \and Jianxiong Zhou \and Kwun Kei Ng \and Juan Helen Zhou$^{\dagger}$}

\authorrunning{Z. Dong et al.}

\institute{National University of Singapore, Singapore \\
$^{*}$Equal contribution \quad $^{\dagger}$Corresponding author  \\
\email{\{zijian\_dong, helen.zhou\}@nus.edu.sg}, \ 
\email{lin\_yi@u.nus.edu}}

\maketitle

\begin{abstract}
Diffusion MRI is widely used to probe brain microstructure, with particular sensitivity to early cerebrovascular and neurodegenerative changes. Neurite Orientation Dispersion and Density Imaging (NODDI) decomposes the diffusion signal into three biophysically interpretable maps — neurite density index (NDI), orientation dispersion index (ODI), and free water fraction (FWF) — capturing neurite packing, fiber coherence, and extracellular fluid, respectively. These 3D maps provide a rich substrate for learning transferable microstructural representations for the human brain, yet effectively integrating them remains an open challenge: standard representation learning struggles to disentangle the unique information carried by each of the three brain microstructure maps (\emph{i.e.,} NDI, ODI, and FWF)  from their shared and synergistic interactions. Here, we present \textbf{BrainFIBRE} (\textbf{Brain} \textbf{F}oundation Model via \textbf{I}nformation Decomposition for \textbf{BR}ain Microstructur\textbf{E}), the first foundation model for brain microstructure, pretrained on three NODDI-derived microstructure maps from the UK Biobank dataset (55,592 participants). To achieve this, we propose Self-supervised Partial Information Decomposition (SPID), which extends PID-guided multimodal learning to the self-supervised regime for the first time. A novel Counterfactual Candidate Construction (CCC) paradigm perturbs inter-modality alignment through modality dropping and swapping, providing the contrastive signal for a Mixture-of-Experts (MoE) architecture to disentangle unique, synergistic, and redundant information without any downstream label. Evaluated on both Caucasian and Asian cohorts, our model achieves state-of-the-art performance across diverse downstream tasks predicting age, sex, cerebrovascular disease (CeVD) and neurodegenerative markers, and cognitive performance, while yielding neurobiologically interpretable representations that reveal task- and cohort-specific interaction patterns among microstructural compartments. BrainFIBRE establishes a versatile foundation for neuroimaging analysis at the microstructural level. Code is available at \href{https://github.com/hzlab/BrainFIBRE}{https://github.com/hzlab/BrainFIBRE}
  \keywords{Brain foundation model \and Diffusion MRI \and NODDI \and Partial Information Decomposition \and Mixture-of-Experts}
\end{abstract}

\section{Introduction}
\label{sec:intro}

\begin{figure}
    \centering
    \includegraphics[width=1\linewidth]{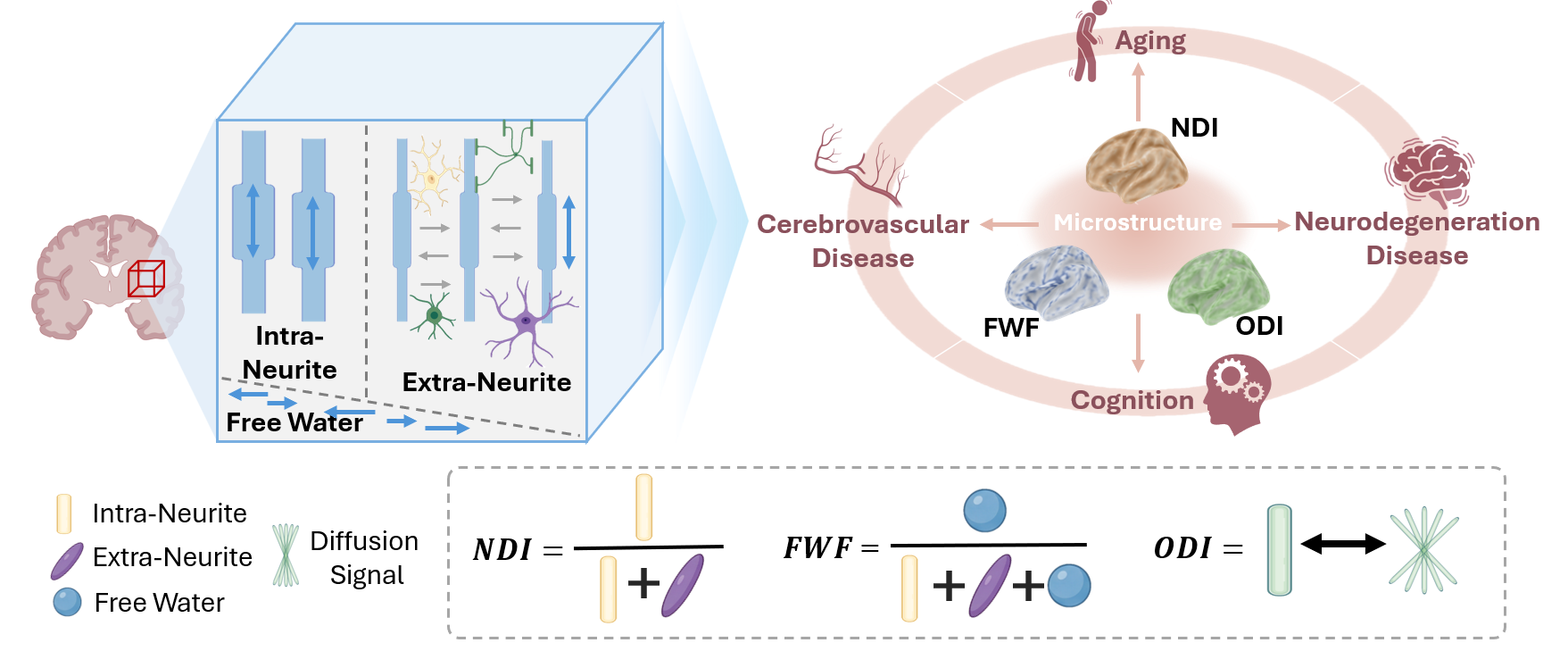}
    \caption{\textbf{Overview of NODDI-derived microstructural maps} (the input to BrainFIBRE). NODDI is a biophysical model that partitions tissue microstructure into three compartments (intra-neurite, extra-neurite, and free water) to derive three corresponding parametric maps: Neurite Density Index (NDI), Orientation Dispersion Index (ODI), and Free Water Fraction (FWF). Compared to conventional diffusion maps, these biologically interpretable maps offer greater sensitivity to microstructural changes, providing critical insights into brain aging, cerebrovascular pathology, neurodegenerative processes, and interindividual variation in cognitive performance.}
    \label{fig:fig1}
\end{figure}

The human brain is built upon a highly complex microstructural architecture. Unraveling this structure is essential to understanding the biological mechanisms behind cognition, healthy aging, and neurological disease. Diffusion Magnetic Resonance Imaging (dMRI) has emerged as a cornerstone technology for this purpose, offering a unique, non-invasive window into tissue microstructure by measuring the restricted random motion of water molecules within biological tissues \cite{le2001diffusion}. Despite its widespread clinical and research utility, conventional measures derived from diffusion fundamentally lack biological specificity. They summarize water diffusion across all encoding directions within a voxel, rather than isolating distinct microstructural compartments. For example, mean diffusivity (MD) quantifies the average rate of molecular diffusion, and fractional anisotropy (FA) reflects the degree of diffusion anisotropy, yet both are sensitive to multiple underlying tissue features without specifying their individual contributions. Consequently, they conflate distinct biophysical phenomena \cite{jones2013white}.

Neurite Orientation Dispersion and Density Imaging (NODDI) provides a more biologically specific characterization of tissue microstructure by modeling the voxel-wise diffusion signal into intra-neurite, extra-neurite, and free water compartments (Fig.\ref{fig:fig1}). This framework derives three parametric maps: neurite density index (NDI), orientation dispersion index (ODI), and free water fraction (FWF), which respectively characterize neurite packing, fiber coherence, and extracellular fluid. This microstructural modelling approach disentangles processes that are typically conflated in conventional diffusion metrics, such as neurite loss, demyelination, edema, and neuroinflammation, thereby yielding more directly interpretable tissue parameters\cite{zhang2012noddi}.

Despite the biological specificity in NODDI, harnessing its rich, multiparametric maps for predictive modeling remains a profound computational challenge: how to effectively fuse these complex, three-dimensional spatial maps without re-entangling their distinct biophysical meanings. Addressing this challenge is crucial, as NODDI-derived maps capture clinically consequential signals that are inaccessible to macroscopic contrasts: they reveal systematic yet regionally heterogeneous microstructural trajectories across aging in all three compartments \cite{merluzzi2016age,greenman2025aging,yu2024noddi,peters2014age}, provide mechanistically grounded sensitivity to cerebrovascular injury (\emph{e.g.}, white matter hyperintensities (WMH)) and cognitive decline \cite{ji2017distinct,duering2018free}, and detect early Alzheimer's-related microstructural alterations \cite{merluzzi2016age,yu2024noddi}. 

Recent brain foundation models have revolutionized neuroimaging analysis by leveraging self-supervised learning on large-scale, unlabeled datasets to extract transferable representations \cite{carobrainlm,dong2024brain,dongbrain,rui2025multi}. However, they mainly focus on either macroscopic structural MRI (\emph{e.g.}, T1- and T2-weighted scans) or functional MRI, leaving a model dedicated to tissue microstructure conspicuously absent. Furthermore, effectively integrating the NODDI maps into a unified learning framework is itself non-trivial. Each map encodes a distinct biophysical process, yet previous multi-modal fusion strategies in brain imaging \cite{rui2025multi,dongbrain,khajehnejad2025brainsymphony} collapse different modalities into a monolithic latent space, risking the re-entanglement of the very processes that the NODDI compartmental model was explicitly designed to decouple. A principled framework is therefore needed to systematically characterize the interactions among neurite density, orientation dispersion, and free water fraction. Such a framework must disentangle whether these maps provide complementary, redundant, or synergistic information for different neurological tasks, while preserving the biophysical specificity afforded by NODDI.

Partial Information Decomposition (PID) offers a theoretical foundation for understanding multimodal interactions \cite{williams2010nonnegative,bertschinger2014quantifying}. It decomposes the total information that multiple sources carry about a target into unique, redundant, and synergistic components. Uniqueness isolates what each modality contributes alone, redundancy captures information shared across modalities, and synergy reveals information that emerges only from their joint observation. Recently, a Mixture-of-Experts (MoE) architecture operationalizing PID has been proposed, assigning dedicated experts to each interaction type and extending the framework beyond the two-modality setting \cite{xin2025i2moe}. It achieves interpretable predictions on Alzheimer's Disease classification with four input modalities including imaging, genetic, clinical, and biospecimen data. However, it approximates PID interaction types through binary label agreement among unimodal classifiers. This coupling strictly ties the decomposition to classifier quality and, crucially, precludes the large-scale, self-supervised pretraining essential for foundation models.

\begin{figure}[t]
    \centering
    \includegraphics[width=1\linewidth]{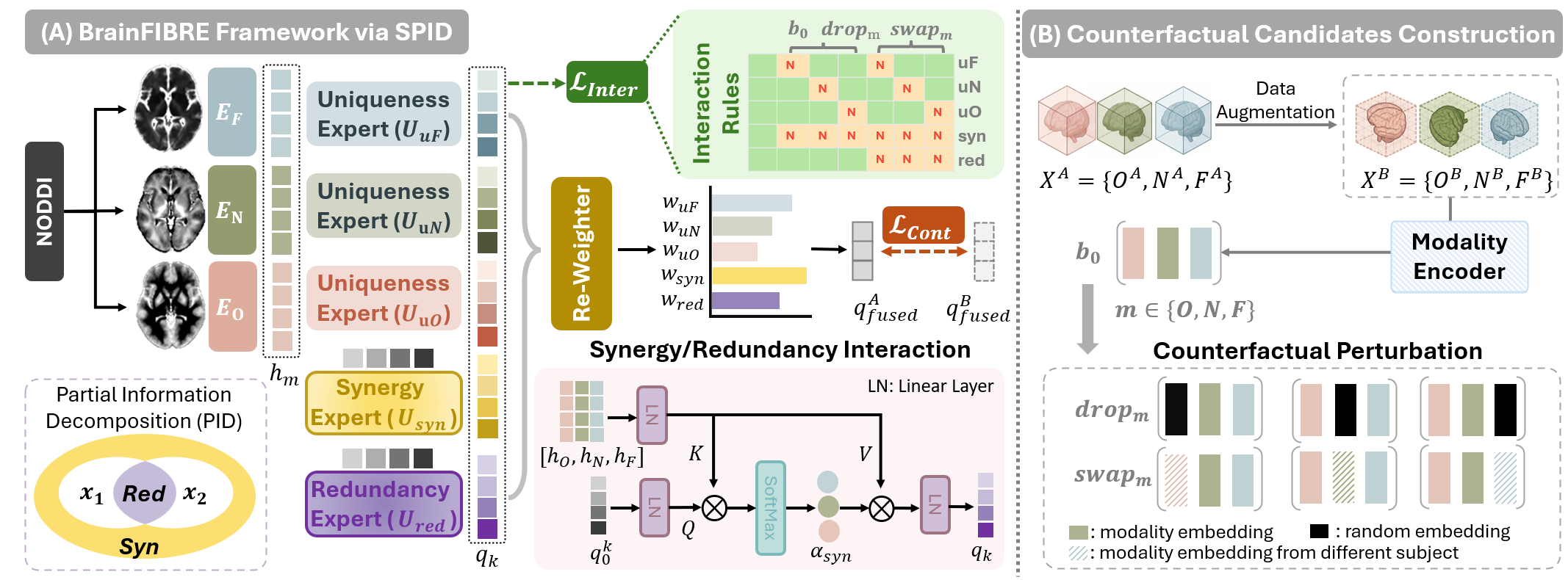}
    \caption{\textbf{Overview of BrainFIBRE}. (\textbf{A}) By the design of Self-supervised Partial Information Decomposition (SPID), BrainFIBRE extends classical PID to isolate unique, synergistic, and redundant information for self-supervised  pretraining. Three NODDI maps are processed by unimodal encoders into embeddings \(h_m\), which are routed to five interaction experts: three for uniqueness and two for synergy and redundancy. A Re-Weighter adaptively aggregates expert embeddings \(q_k\) via learned weights \(w_k\) into fused representation \(q_{fused}\). Pretraining is optimized by an interaction loss  \(\mathcal{L}_{Inter}\) guided by interaction rules, and a global contrastive loss \(\mathcal{L}_{Cont}\). (\textbf{B}) Counterfactual Candidate Construction (CCC) generates perturbed triplets (base \(b_0\), modality dropout (\(drop_m\)), and modality swap (\(swap_m\))) to provide self-supervised signals for SPID.}
    \label{fig:fig2}
\end{figure}

In this work, we introduce \textbf{BrainFIBRE} (\textbf{Brain} \textbf{F}oundation Model via \textbf{I}nformation Decomposition for \textbf{BR}ain Microstructur\textbf{E}), the first foundation model dedicated to brain tissue microstructure (Fig.\ref{fig:fig2}). It was pretrained on NODDI-derived microstructure maps from 55,592 participants in the UK Biobank, the largest publicly available diffusion MRI dataset to date. BrainFIBRE treats the three NODDI maps — NDI, ODI, and FWF — as distinct modalities and learns unified multimodal representations that capture their joint predictive power while preserving the biophysical specificity of each individual compartment. To achieve this, we propose \emph{Self-supervised Partial Information Decomposition (SPID)}, a novel pretraining objective that extends PID-guided multimodal learning to the self-supervised setting. SPID drives an MoE architecture comprising three Uniqueness experts, one Redundancy expert, and one Synergy expert, each trained to isolate its targeted information component without requiring downstream labels. Central to SPID is a \emph{Counterfactual Candidate Construction (CCC)} paradigm that systematically generates counterfactual views through modality dropping and swapping, providing the contrastive signal needed to disentangle information in a purely self-supervised manner.

We evaluate BrainFIBRE on a comprehensive suite of downstream tasks spanning prediction of age, sex, cerebrovascular disease \& neurodegenerative markers, and cognition across both Caucasian and Asian cohorts. It achieves state-of-the-art performance across tasks while yielding neurobiological interpretations that align with known microstructural signatures of aging and disease.

Our contributions can be summarized as follows:
\begin{itemize}
\item We present BrainFIBRE, the first brain foundation model for tissue microstructure, pretrained on NODDI-derived maps from UK Biobank.
\item We propose Self-supervised Partial Information Decomposition (SPID) with a Counterfactual Candidate Construction (CCC) paradigm, enabling PID-guided self-supervised pretraining of MoE without any downstream label.
\item We demonstrate state-of-the-art performance across diverse downstream tasks on both Caucasian and Asian cohorts, with neurobiological interpretations.
\end{itemize}

\section{Related Work}

\subsection{Brain Foundation Model}

Recent advances in brain foundation models have demonstrated the power of self-supervised learning to extract transferable representations from large-scale neuroimaging. These efforts have spanned diverse modalities. In the domain of electroencephalography (EEG), among the most representative works, LaBraM \cite{jianglarge} pretrains a large brain model via masked EEG modeling, while NeuroLM \cite{jiangneurolm} further integrates EEG signals into a large language model to enable unified multi-task inference through instruction tuning. In functional MRI (fMRI), BrainLM \cite{carobrainlm} and Brain-JEPA \cite{dong2024brain} pioneer representation learning for brain dynamics through masked prediction and joint-embedding architectures, respectively. In structural MRI (sMRI), BrainMVP \cite{rui2025multi} proposes self-supervised pretraining on 3D brain volumes, learning cross-parametric correspondences among multi-modal MRI. More recently, multimodal brain foundation models that jointly fuse structural and functional neuroimaging have also been proposed \cite{dongbrain,khajehnejad2025brainsymphony}. Despite this progress, existing brain foundation models operate exclusively on macroscopic contrasts, and none address the tissue microstructures. A related line of diffusion MRI work studies learning-based microstructural parameter estimation, such as codebook-driven estimation of tissue characterization indices from diffusion measurements \cite{wang2026deep}. BrainFIBRE is complementary: rather than deriving microstructural maps from raw DWI, it learns transferable representations from already-derived NODDI maps and models the interactions among ODI, NDI, and FWF. Thus, interaction-aware representation learning over NODDI-derived tissue microstructure remains largely unexplored.

\subsection{Multimodal Interaction MoEs}

Existing multimodal MoEs typically rely on heuristic routing \cite{he2021fastmoe,shazeer2017outrageously}, offering limited insight into how different modalities interact with each other \cite{liu2024deepseek,shen2024mome}. Modeling multimodal interactions provides a principled and natural inductive bias for guiding expert specialization in MoEs. In MMoE \cite{yu2024mmoe}, dedicated experts are assigned to capture predetermined types of multimodal interactions. Building on this idea, I$^2$MoE introduces interaction-type categorization into the MoE training process and generalized the framework to support an arbitrary number of modalities \cite{xin2025i2moe}. However, its approximation of PID interaction types relies on label agreement among unimodal classifiers, coupling the decomposition to classifier quality and, crucially, ruling out the large-scale self-supervised pretraining on which foundation models depend. General multimodal self-supervised frameworks such as M3-JEPA \cite{lei2024m3} use MoE-style latent alignment across heterogeneous modalities, but do not assign experts to explicit PID interaction atoms. In contrast, BrainFIBRE uses counterfactual modality dropping and swapping to specialize experts toward unique, redundant, and synergistic information in a label-free manner.

\section{BrainFIBRE}
We propose BrainFIBRE, a brain microstructure foundation model driven by the proposed Self-supervised Partial Information Decomposition (SPID)  (Fig.~\ref{fig:fig2}.). For each participant, the input is a 3D microstructural triplet, \(\mathbf{X} = \left\{ X_m\right\}_m\), where $m\in$ \{O, N, F\} and each $X_m \in \mathbb{R}^{H \times W \times D}$ corresponds to a specific NODDI-derived map (O:ODI, N:NDI, F:FWF). To disentangle the profound relationships among modalities, we design a Mixture-of-Experts (MoE) architecture to explicitly model distinct types of modality interactions (Sec.~\ref{sec3.1}) while extending PID to the self-supervised setting (Sec.~\ref{sec3.2}). The pretraining objective is anchored by a Counterfactual Candidate Construction (CCC) strategy, which generates perturbations to provide optimization signals.

\subsection{Architecture}
\label{sec3.1}

\subsubsection{Disentangled Encoding of Modality Interaction} As shown in Fig.~\ref{fig:fig2}A, BrainFIBRE consists of modality encoders and interaction experts. First, three 3D Vision Transformers (ViT)~\cite{dosovitskiy2020image} encoders, denoted as $E_O$, $E_N$, and $E_F$, are applied on each modality map in the input triplet \(\mathbf{X}\) to extract unimodal embeddings $h_m \in \mathbb{R}^{1 \times d_{enc}}$.

Guided by PID, we categorize interactions into three types: \textit{uniqueness}, \textit{synergy} , and \textit{redundancy}, and define five distinct experts to extract specific interaction embeddings \(q_k\): three uniqueness experts ($U_{uO}, U_{uN}, U_{uF}$) for modality-exclusive features, one synergy expert ($U_{syn}$) for emergent multimodal features, and one redundancy expert ($U_{red}$) for shared information. The entire disentangled encoding process is formulated as:
\begin{equation}
h_m = E_m(X_m) \in \mathbb{R}^{1 \times d_{enc}}, \quad q_k = U_k(\text{InterRule}([{h_O, h_N, h_F}]) \in \mathbb{R}^{1 \times d_{exp}}
\end{equation}
where $k \in \mathcal{K}= \{\text{uO}, \text{uN}, \text{uF}, \text{syn}, \text{red}\}$, \(d_{enc}\) and \(d_{exp}\)  are the dimension of unimodal and expert embeddings, and InterRule$(\cdot)$ denotes expert-specific perturbations based on interaction rules (detailed in Sec.~\ref{sec3.2}).

Specifically, uniqueness experts capture the information unique to each modality via linear projection heads. On the other hand, the synergy and redundancy experts model more complex interactions via query-based information encoding (Fig.\ref{fig:fig2}.A). For either of these two, a learnable query token $q^{k}_0 \in \mathbb{R}^{1 \times d_{enc}}$ is initialized as the Query ($Q$). The three unimodal embeddings are concatenated and linearly projected to form the Key ($K \in \mathbb{R}^{3 \times d_{enc}}$) and Value ($V \in \mathbb{R}^{3 \times d_{enc}}$). The expert embedding $q_k$ is computed via scaled dot-product attention:
\begin{equation}
\alpha_k = \text{Softmax}\left(\frac{q^{k}_0 K^\top}{\sqrt{d_{enc}}}\right)\in \mathbb{R}^{1 \times 3}, \quad q_k = \text{Linear}(\alpha_k V) \in \mathbb{R}^{1 \times d_{exp}}
\end{equation}
where $\alpha_k$ is the attention weights and \(\text{Linear}(\cdot)\) is a linear projection head. This design allows the synergy and redundancy experts to dynamically aggregate information across all modalities driven by their respective learned queries.

\subsubsection{Adaptive Fusion of Expert Embeddings} To integrate the expert embeddings, we introduce a Re-weighter module to dynamically assign importance to each expert. Implemented as a Multi-Layer Perceptron (MLP), the Re-weighter takes the concatenated unimodal embeddings to generate a probability distribution over the five interaction experts. This yields a set of expert weights \(w_k\), which are used to produce the final fused representation \(q_{fused}\):
\begin{equation}
\mathbf{w} = \text{Softmax}(\text{MLP}([h_O, h_N, h_F]))\in \mathbb{R}^{1 \times 5}, \quad q_{fused} = \sum_{k \in \mathcal{K}} w_k \cdot q_k
\end{equation}

\subsection{Self-supervised Partial Information Decomposition (SPID)}
\label{sec3.2}
\subsubsection{Counterfactual Candidate Construction (CCC)} While PID-informed MoE frameworks enable interpretable multimodal learning~\cite{xin2025i2moe},  they are typically trained in a supervised manner, limiting their generalizability for large-scale pre-training. To address this, we propose Self-supervised Partial Information Decomposition (SPID). The core challenge of extending PID to a self-supervised setting lies in isolating unique, redundant, and synergistic interactions without external supervision. Standard spatial augmentations that preserve the aligned modality triplet fail to provide the necessary teaching signal to disentangle these interactions, as the model is never forced to distinguish between modality-specific and the natural co-occurrence among modalities.

To address this, we introduce Counterfactual Candidate Construction (CCC) (Fig.~\ref{fig:fig2}B) to systematically disrupt biophysical alignment and generate the contrastive signals, enabling the model to isolate and learn each specific PID interaction in an entirely self-supervised manner. For a minibatch of $N$ participants, let $\mathbf{X}_i$ denote the input for the $i$-th participant. We generate two augmented views, $\mathbf{X}^A(i)$ and $\mathbf{X}^B(i)$, each comprising a triplet of NODDI maps. These views are processed by unimodal encoders to yield their respective embeddings:
\begin{equation}
\mathbf{H}^v(i) = \{h_m^v(i)\}_m = \{ E_m(X_m^v(i)) \}_m, \quad m \in \{O, N, F\} \quad \text{and} \quad v \in \{A, B\}
\end{equation}
We use $\mathbf{H}^A$ to obtain the anchor embedding via the five interaction experts. On the other hand, $\mathbf{H}^B$ is used to construct a set of counterfactural perturbations:
\begin{enumerate}
\item Base Candidate ($b_0$): The complete and unperturbed embedding derived from $\mathbf{X}^B(i)$:
\begin{equation}
    b_0(i) =  U_k([h_O^B(i), h_N^B(i), h_F^B(i)])
    \end{equation}
\item Modality Dropout ($drop_m$): Simulates the absence of a specific modality. $drop_m(i)$ replaces the $m$-th component of  \(\mathbf{H}^v\) with Gaussian noise. For instance:
\begin{equation}
    drop_O(i) = U_k([\text{noise}, h_N^B(i), h_F^B(i)])
    \end{equation}
\item Modality Swap ($swap_m$): Constructs a counterfactual sample to destroy biophysical correspondances. $swap_m(i)$ replaces the $m$-th component of \(\mathbf{H}^v\) with the same unimodal embedding from another randomly selected participant $j$ in the same batch. For instance:
\begin{equation}
    swap_O(i) =  U_k([h_O^B(j), h_N^B(i), h_F^B(i)])
    \end{equation}
\end{enumerate}
Together, these perturbations form a candidate pool $\mathcal{C}_i = \{b_0(i)\} \cup \{drop_m(i)\}_{m}$ $\cup \{swap_m(i)\}_{m}$ for each participant. \textbf{Crucially, identical spatial transformations are applied across the entire batch for each view, guaranteeing that swapped embeddings maintain strict anatomical alignment to prevent trivial solutions.} Since human brains share highly conserved macroscopic anatomy, swapping a modality with one from another participant yields a counterfactual that appears \textbf{\textcolor{blue}{\emph{anatomically valid}}} but is fundamentally \textbf{\textcolor{blue}{\emph{microstructurally mismatched}}}. By forcing the model to distinguish these subtle disruptions, SPID moves beyond recognizing global brain shapes to genuinely learning voxel-level biophysical correspondences among neurite density, orientation dispersion, and free water.

\subsubsection{Expert-Conditional Interaction Rules} Based on the candidate pool  $\mathcal{C}_i$ generated via CCC, we define specific interaction rules to construct positive samples or negative samples sets for each expert $k$. For instance, the uniqueness expert $U_O$ should be invariant to changes in other modalities ($N, F$) but sensitive to its own ($O$); thus, $drop_N$,$ drop_F$, $swap_N$, and $swap_F$ are considered as positives, whereas $drop_O$ and $swap_O$ are negatives. The redundancy expert $U_{red}$ should remain robust to missing partial information, making all dropout candidates positive and all swap candidates negative. For the synergy expert $U_{syn}$, which relies on constructive multimodal interactions, any perturbation destroys the emergent information, leaving only the base candidate $b_0$ as positive. Complete definitions for all experts are detailed in Table~\ref{tab:interaction_rules}.

Based on these rules, we formulate the interaction objective \(\mathcal{L}_{inter}\) using a multi-positive symmetric InfoNCE loss \cite{oord2018representation}. Let $\mathcal{C}_{batch}$ denotes the perturbation candidates from all the participants in the current batch. For each expert \(U_k\) and participant \(i\), the positive candidate pool \(\mathcal{P}_k(i) \subset \mathcal{C}_{batch}\), while the remainder as negatives. The objective is to maximize the similarity between the anchor \(q_k^A(i) = U_k([h_O^A(i),h_N^A(i),h_F^A(i)]))\) and its positive candidates in \(\mathcal{P}_k(i)\), while pushing away negative candidates and all other participants in the batch:  
\begin{equation}
\mathcal{L}_{Inter} = \sum_{k \in \mathcal{K}} \sum_{i=1}^{N} - \log \frac{\sum_{z^+ \in \mathcal{P}_k(i)} \exp(\operatorname{sim}(q_k^A(i), z^+) / \tau)}{\sum_{z \in \mathcal{C}_{batch}} \exp(\operatorname{sim}(q_k^A(i), z) / \tau)}
\end{equation}
where $\tau$ is the temperature parameter. Although formulated with View A as the anchor for brevity, \(\mathcal{L}_{Inter}\) is symmetrized by averaging the losses computed in both directions (A$\to$B and B$\to$A). By leveraging PID principles to specialize each expert via the interaction rules, this objective enables the end-to-end training of BrainFIBRE in a purely self-supervised, label-free manner.

\subsubsection{Pretraining Objectives} 
To learn robust participant-specific representations, we apply the symmetric InfoNCE loss~\cite{chen2020simple}, denoted as \(\mathcal{L}_{\text{Cont}}\), to the fused embeddings \(q_{\text{fused}}^{A}\) and \(q_{\text{fused}}^{B}\) produced by the two augmented views. This objective maximizes the similarity between the anchor \(q_{\text{fused}}^{A}\) and its positive counterpart \(q_{\text{fused}}^{B}\), while pushing away embeddings from other participants in the batch. Like \(\mathcal{L}_{Inter}\), \(\mathcal{L}_{Cont}\) is symmetrized by averaging across both directions.
 
Training MoE frameworks in a self-supervised manner is prone to representation collapse, where the model trivially assign all samples to a single expert. To mitigate this, we introduce two auxiliary regularization terms. The Balance Loss \(\mathcal{L}_{balance}\) minimizes the Kullback-Leibler (KL) divergence between the batch-averaged expert assignment probabilities and a uniform distribution, preventing global expert collapse. The Entropy Loss \(\mathcal{L}_{entropy}\) minimizes the negative entropy of individual weight distributions, encouraging soft probabilistic assignments such that each participant integrates information from multiple experts rather than relying on a deterministic, one-hot routing.

The final training objective is a weighted sum of the interaction loss, global contrastive loss, and the expert regularizations:  
\begin{equation}
\mathcal{L}_{total} = \mathcal{L}_{Inter} + \mathcal{L}_{Cont} + \beta \mathcal{L}_{balance} + \gamma \mathcal{L}_{entropy}
\end{equation}
where $\beta$ and $\gamma$ are hyperparameters for expert regularization.

\textbf{Theoretical grounding} The CCC-based interaction rules provide a theoretical basis for using SPID as a self-supervised surrogate of PID. Specifically, \emph{Theorem 1} in the supplementary material establishes the correctness of the expert-conditional rules: for each expert, the positive candidates are exactly those perturbations that preserve its designated latent interaction factor, while the negative candidates destroy or replace that factor almost surely. Building on this rule-level correctness, \emph{Theorem 2} shows that the population version of the interaction contrastive task admits a PID-factorized optimum, where each expert representation can solve its contrastive task using only its corresponding factor: the three modality-unique factors, the redundant factor, or the synergistic factor. Consequently, \emph{Corollary 1} implies that, if this factorized solution is reached, the five expert representations instantiate a coarse five-atom PID decomposition of the latent participant-specific state. This result should be interpreted as a population-level grounding of our design rather than a claim of exact recovery of the full three-source PID lattice or finite-sample identifiability. Detailed assumptions and proofs are provided in Supplementary~\ref{sec:spid_pid_grounding}.


\begin{table}[t]
    \centering
    \caption{Expert-Conditional Interaction Rules.}
    \label{tab:interaction_rules}
    
    \scriptsize 
    \renewcommand{\arraystretch}{1.4}
    \setlength{\tabcolsep}{2pt}
    
    \setlength{\aboverulesep}{0pt}
    \setlength{\belowrulesep}{0pt}
    
    \begin{tabular}{l c l l}
        \toprule
        
        \rowcolor{gray!25}
        \multicolumn{1}{c}{\textbf{Expert}} & 
        \multicolumn{1}{c}{\textbf{Anchor}} & 
        \multicolumn{1}{c}{\textbf{Positive}} & 
        \multicolumn{1}{c}{\textbf{Negative}} \\
        \midrule
        
        $U_{uO}$ 
        & $q_O^A$ 
        & $\{b_0, drop_N, drop_F, swap_N, swap_F\}$ 
        & $\{drop_O, swap_O\}$ \\
        
        $U_{uN}$ 
        & $q_N^A$ 
        & $\{b_0, drop_O, drop_F, swap_O, swap_F\}$ 
        & $\{drop_N, swap_N\}$ \\
        
        $U_{uF}$ 
        & $q_F^A$ 
        & $\{b_0, drop_O, drop_N, swap_O, swap_N\}$ 
        & $\{drop_F, swap_F\}$ \\
        
        $U_{\text{syn}}$ 
        & $q_{\text{syn}}^A$ 
        & $\{b_0\}$ 
        & $\{drop_m, swap_m\}_{m}$ \\ 
        
        $U_{\text{red}}$ 
        & $q_{\text{red}}^A$ 
        & $\{b_0, drop_O, drop_N, drop_F\}$ 
        & $\{swap_m\}_{m}$ \\
        \bottomrule
    \end{tabular}
\end{table}

\section{Experiments}

\subsection{Datasets}

\subsubsection{Pretraining Dataset} The UK Biobank (UKB) is a population-based prospective study of over 500,000 participants (across the United Kingdom) with extensive phenotypic, clinical, and imaging data \cite{sudlow2015uk}. A subset (middle-aged to older adults) underwent brain MRI as part of an ongoing imaging initiative, providing substantial variability in aging and cardiometabolic risk factors \cite{miller2016multimodal}. Diffusion MRI was acquired via a harmonized multi-shell protocol on identical 3T Siemens Skyra scanners, from which NODDI metrics (ODI, NDI, and FWF) were systematically estimated. BrainFIBRE was pretrained on these NODDI-derived maps from 55,592 UKB participants.

\subsubsection{Downstream Datasets} To rigorously evaluate the representational power and clinical generalizability of BrainFIBRE, we benchmarked the model across diverse downstream datasets and tasks. We first evaluated the model on a held-out UKB test set (4,307 participants) across demographic (age, sex), neurodegenerative (hippocampal atrophy at 5-year follow-up), and cognitive (processing speed) predictions. Next, we assessed the model on the independent Human Connectome Project in Aging (HCP-Aging) dataset \cite{bookheimer2019lifespan}(630 participants) to evaluate demographic and executive function (Flanker, CardSort) prediction. Crucially, to demonstrate the utility of BrainFIBRE beyond predominantly Caucasian populations and into targeted clinical applications, we evaluated it on SINGER \cite{xu2022singapore}. This Asian dataset (818 participants) comprises community-dwelling older adults at risk for vascular cognitive impairment, ranging from normal cognition to mild cognitive impairment. We utilized this dataset to predict brain age, cognition, and white matter hyperintensity (WMH) volume, a key cerebrovascular biomarker. Results were averaged over three independent runs, each using a distinct random seed for data split (train/val/test ratio of 6:2:2). Details of imaging preprocessing are provided in the supplementary materials. 

\subsection{Implementation Details}
In BrainFIBRE, we employed 3D ViT-S \cite{dosovitskiy2020image} as the backbone for all unimodal encoders, with an input volume size of (96, 112, 96) and a patch size of (16, 16, 16). \(d_{enc}\) and \(d_{exp}\) is 384 and 128, respectively. The pretraining was conducted on 8 NVIDIA H200 (140GB) GPUs. The model was pretrained for 200 epochs with a global batch size of 280, which required 40 hours. We utilized the AdamW optimizer \cite{loshchilov2017decoupled} ($\beta_1=0.9, \beta_2=0.999$) with a base learning rate of $2e^{-4}$, a weight decay of $1e^{-4}$, and a cosine annealing learning rate scheduler. 

Regarding the pretraining objectives, the temperature parameters $\tau$ for $\mathcal{L}_{Inter}$ and $\mathcal{L}_{Cont}$ were empirically set to 0.2 and 0.12, respectively. The expert regularization weights were set to $\beta=0.05$ and $\gamma=0.01$. To prevent early expert collapse, we adopted a warmup strategy during the first 5 epochs, where uniform routing weights were explicitly assigned to all five interaction experts. 

For regression tasks, we adopted Mean Absolute Error (MAE) and Pearson Correlation (Corr) as the evaluation metrics. For classification tasks, we adopted the F1 score and Accuracy (Acc) metrics. More details regarding training configurations are provided in the supplementary materials.

\begin{table}[t]
    \centering
    \begin{threeparttable}
        \caption{Performance comparisons on UKB held-out dataset. Best results are bolded; underlined values denote statistical significance against all others ($p < 0.05$).}
        \label{tab:comparison_ukb}
        
        \fontsize{6.5pt}{7.5pt}\selectfont 
        \renewcommand{\arraystretch}{1.2}
        \setlength{\tabcolsep}{0.2pt} 
        
        \begin{tabular}{l c c c c c c c c c}
            \toprule
            \multirow{2}{*}{\textbf{Model}} & 
            \multirow{2}{*}{\textbf{Pretrain}} &
            \multicolumn{2}{c}{\textbf{Age}} & 
            \multicolumn{2}{c}{\textbf{Sex}} & 
            \multicolumn{2}{c}{\textbf{Hipp.Atrophy}} &
            \multicolumn{2}{c}{\textbf{Proc.Speed}} \\
            
            \cmidrule(lr){3-4} \cmidrule(lr){5-6} \cmidrule(lr){7-8} \cmidrule(lr){9-10}
            
            & & MAE$\downarrow$ & Corr$\uparrow$ & F1$\uparrow$ & Acc$\uparrow$ & MAE $\downarrow$ & Corr$\uparrow$ & MAE$\downarrow$ & Corr$\uparrow$ \\
            \midrule
            
            ViT-O & \xmark & 4.50\std{0.07} & 0.67\std{0.00} & 78.08\std{1.30} & 78.11\std{1.31}  & 110.29\std{3.06} & 0.16\std{0.01} & 0.75\std{0.01} & 0.20\std{0.06} \\
            
            ViT-N & \xmark & 4.49\std{0.09} & 0.67\std{0.01} & 82.01\std{1.29} & 82.13\std{1.31} & 110.01\std{2.57} & 0.17\std{0.03} & 0.75\std{0.01} & 0.20\std{0.04} \\
            
            ViT-F & \xmark & 4.24\std{0.01} & 0.71\std{0.01} & 82.18\std{0.59} & 82.21\std{0.60} & 109.85\std{3.43} & 0.19\std{0.01} & 0.71\std{0.01} & 0.32\std{0.02} \\
            
            I\textsuperscript{2}MOE \cite{xin2025i2moe} & \xmark & 6.21\std{0.15} & 0.42\std{0.05} & 71.94\std{1.32} & 72.58\std{1.39} & 111.90\std{2.23} & 0.17\std{0.02} & 0.73\std{0.01} & 0.33\std{0.02} \\
            
            BrainMVP \cite{rui2025multi} & \cmark & 4.46\std{0.27} & 0.68\std{0.04} & 78.93\std{2.57} & 78.65\std{2.33} & 109.76\std{2.96} & 0.19\std{0.04} & 0.70\std{0.01}& \textbf{0.36}\std{0.01} \\
            \midrule 
            
            TFS & \xmark & 4.33\std{0.02} & 0.69\std{0.01} & 79.55\std{2.02} & 79.62\std{1.99} & 111.05\std{2.08} & 0.18\std{0.01} & 0.71\std{0.01} & 0.32\std{0.02} \\
        
            \textbf{BrainFIBRE} & \cmark & \underline{\textbf{3.95}}\std{0.02} & \underline{\textbf{0.76}}\std{0.00} & \underline{\textbf{89.30}}\std{0.82} & \underline{\textbf{89.33}}\std{0.84} & \textbf{109.21}\std{2.03} & \textbf{0.21}\std{0.01} & \textbf{0.70}\std{0.01} & 0.33\std{0.02} \\
            
            \bottomrule
        \end{tabular}

        \begin{tablenotes}
            \tiny
            \item[] \hspace{-1.6em} TFS: trained-from-scratch. Hipp.Atropy: future hippocampal atrophy. Proc.Speed: processing speed.
        \end{tablenotes}
        
    \end{threeparttable}
\end{table}

\subsection{Downstream Evaluation}

\subsubsection{Performance Comparison with Baselines on Internal Dataset} BrainFIBRE demonstrates strong representation capability and generalizability across a diverse set of downstream tasks, encompassing demographic prediction, the prediction of neurodegenerative marker (hippocampus atrophy), and cognitive performance (processing speed). Table~\ref{tab:comparison_ukb} summarizes the quantitative results on the held-out test set from UKB. To assess our approach, we benchmarked BrainFIBRE against a comprehensive set of baselines: (1) three unimodal 3D ViTs (ViT-O, ViT-N, and ViT-F) trained exclusively on a single NODDI map; (2) I\textsuperscript{2}MOE, a fully supervised multimodal model; (3) BrainMVP, a state-of-the-art multimodal self-supervised foundation model; and (4) a standard train-from-scratch paradigm using the BrainFIBRE architecture. Significance was evaluated via paired t-test, annotated only when BrainFIBRE significantly outperformed all baselines $(p < 0.05)$. Overall, BrainFIBRE outperforms the unimodal, supervised \& self-supervised multimodal baselines. The substantial gains underscore BrainFIBRE's capacity to effectively extract and integrate subtle microstructural features from diffusion imaging data.

\begin{table}[t]
    \centering
    \begin{threeparttable}
        \caption{Performance comparisons on HCP-A dataset. Best results are bolded; underlined values denote statistical significance against all others ($p < 0.05$).}
        \label{tab:comparison_hcp}
        
        \fontsize{6.5pt}{7.5pt}\selectfont 
        \renewcommand{\arraystretch}{1.2}
        \setlength{\tabcolsep}{0.2pt}
        
        \begin{tabular}{l c c c c c c c c c}
            \toprule
            \multirow{2}{*}{\textbf{Model}} & 
            \multirow{2}{*}{\textbf{Pretrain}} & 
            \multicolumn{2}{c}{\textbf{Age}} & 
            \multicolumn{2}{c}{\textbf{Sex}} & 
            \multicolumn{2}{c}{\textbf{Flanker}} & 
            \multicolumn{2}{c}{\textbf{CardSort}} \\
            
            \cmidrule(lr){3-4} \cmidrule(lr){5-6} \cmidrule(lr){7-8} \cmidrule(lr){9-10}
            
            & & MAE$\downarrow$ & Corr$\uparrow$ & F1$\uparrow$ & Acc$\uparrow$ & MAE$\downarrow$ & Corr$\uparrow$ & MAE$\downarrow$ & Corr$\uparrow$ \\
            \midrule
            
            ViT-O & \xmark & 11.92\std{0.51} & 0.49\std{0.09} & 61.33\std{6.70} & 47.09\std{15.40} & 6.14\std{0.34} & 0.23\std{0.07} & 6.10\std{0.41} & 0.18\std{0.05} \\
            
            ViT-N & \xmark & 11.09\std{0.69} & 0.64\std{0.03} & 57.33\std{2.10} & 36.43\std{0.86} & 6.13\std{0.34} & 0.18\std{0.07} & 6.08\std{0.36} & 0.20\std{0.07}\\
            
            ViT-F & \xmark & 6.84\std{0.57} & 0.82\std{0.03} & 57.87\std{4.20} & 52.72\std{4.13} & 5.92\std{0.27} & 0.33\std{0.03} & 5.99\std{0.40} & 0.30\std{0.04}\\
            
            I\textsuperscript{2}MOE \cite{xin2025i2moe} & \xmark & 10.37\std{0.42} & 0.82\std{0.01} & 67.47\std{4.90} & 66.15\std{3.80} & 5.86\std{0.26}& 0.30\std{0.06} & 5.95\std{0.4} & 0.26\std{0.06} \\
            
            BrainMVP \cite{rui2025multi} & \cmark & 8.26\std{3.30} & 0.81\std{0.04} & 71.20\std{9.49} & 79.90\std{4.69} & 5.92\std{0.33} & 0.35\std{0.01} & 6.12\std{0.67} & 0.37\std{0.04}\\
            
            \midrule
            
            TFS & \xmark & 6.57\std{0.35} & 0.84\std{0.02} & 74.40\std{3.39} & 73.86\std{3.02} & 5.99\std{0.23} & 0.31\std{0.04} & 6.08\std{0.47} & 0.34\std{0.02}\\
            
            \textbf{BrainFIBRE} & \cmark & \underline{\textbf{5.54}}\std{0.50} & \underline{\textbf{0.87}}\std{0.02} & \underline{\textbf{83.73}}\std{2.29} & \underline{\textbf{83.36}}\std{2.09} & \textbf{5.75}\std{0.39} & \underline{\textbf{0.40}}\std{0.05} & \textbf{5.92}\std{0.29} & \underline{\textbf{0.38}}\std{0.02} \\
            
            \bottomrule
        \end{tabular}

         \begin{tablenotes}
            \tiny
            \item[] \hspace{-1.6em} TFS: trained-from-scratch.
        \end{tablenotes}
        
    \end{threeparttable}
\end{table}

\begin{table}[t]
    \centering
    \begin{threeparttable}
        \caption{Performance comparisons on SINGER. Best results are bolded; underlined values denote statistical significance against all others ($p < 0.05$).}
        \label{tab:comparison_asian}
        
        \fontsize{6.5pt}{7.5pt}\selectfont 
        \renewcommand{\arraystretch}{1.2}
        \setlength{\tabcolsep}{0.2pt}
        
        \begin{tabular}{l c c c c c c c c c}
            \toprule
            \multirow{2}{*}{\textbf{Model}} & 
            \multirow{2}{*}{\textbf{Pretrain}} & 
            \multicolumn{2}{c}{\textbf{Age}} & 
            \multicolumn{2}{c}{\textbf{Mean Thickness}} & 
            \multicolumn{2}{c}{\textbf{WMH}} &
            \multicolumn{2}{c}{\textbf{Proc.Speed}} \\
            
            \cmidrule(lr){3-4} \cmidrule(lr){5-6} \cmidrule(lr){7-8} \cmidrule(lr){9-10}
            
            & & MAE$\downarrow$ & Corr$\uparrow$ & MAE$\downarrow$ & Corr$\uparrow$ & MAE$\downarrow$ & Corr$\uparrow$ & MAE$\downarrow$ & Corr$\uparrow$ \\
            \midrule
            
            ViT-O & \xmark & 3.84\std{0.06} & 0.37\std{0.03} & 0.08\std{0.00} & 0.26\std{0.05}& 2.47\std{0.33} & 0.21\std{0.02} & 0.67\std{0.08} & 0.09\std{0.03} \\
            
            ViT-N & \xmark & 3.60\std{0.04} & 0.47\std{0.01} & 0.08\std{0.00} & 0.36\std{0.05}& 2.42\std{0.28} & 0.37\std{0.12} & 0.67\std{0.07} & 0.10\std{0.03}\\
            
            ViT-F & \xmark & 3.37\std{0.11} & 0.55\std{0.03} & 0.07\std{0.00} & 0.47\std{0.03}& 2.37\std{0.31} & 0.34\std{0.04} & 0.67\std{0.08} & 0.21\std{0.01} \\
            
            I\textsuperscript{2}MOE \cite{xin2025i2moe} & \xmark & 4.10\std{0.13} & 0.36\std{0.07} & 0.07\std{0.00} & 0.48\std{0.04}& 2.66\std{0.38} & 0.34\std{0.00} & 0.66\std{0.06}& 0.19\std{0.03}\\
            
            BrainMVP \cite{rui2025multi} & \cmark & 3.44\std{0.40} & 0.50\std{0.03} & 0.07\std{0.02} & 0.52\std{0.03} & 2.54\std{0.56} & 0.44\std{0.13} & 0.70\std{0.09} & 0.15\std{0.03} \\
            
            \midrule
            
            TFS & \xmark & 3.56\std{0.11} & 0.49\std{0.03} & 0.07\std{0.01} & 0.41\std{0.10} & 4.49\std{0.40} & 0.25\std{0.16} & 0.86\std{0.33} & 0.19\std{0.01} \\
            
            \textbf{BrainFIBRE} & \cmark & \underline{\textbf{3.21}}\std{0.12} & \underline{\textbf{0.60}}\std{0.04} & \textbf{0.07}\std{0.00} & \underline{\textbf{0.55}}\std{0.02}& \underline{\textbf{2.08}}\std{0.30} & \underline{\textbf{0.59}}\std{0.02} & \textbf{0.66}\std{0.07} & \underline{\textbf{0.25}}\std{0.03} \\
            
            \bottomrule
        \end{tabular}

        \begin{tablenotes}
            \tiny
            \item[] \hspace{-1.6em} TFS: trained-from-scratch. WMH: white matter hyperintensity volume. Proc.Speed: processing speed.
        \end{tablenotes}
        
    \end{threeparttable}
\end{table}

\subsubsection{Superior Performance and Generalizability on External Dataset across Cohorts} 
We further evaluated BrainFIBRE on the HCP-Aging dataset on demography prediction (age, sex) and executive function tasks (Flanker and CardSort scores). As shown in Table \ref{tab:comparison_hcp}, BrainFIBRE consistently achieves superior predictive performance compared to baseline methods, highlighting its capacity to capture both fundamental demographic features and complex cognitive outcomes. This suggests BrainFIBRE's strong generalizability to unseen cohorts and the sensitivity of its learned representations to clinically relevant traits.

To further evaluate cross-ethnic generalizability and disease-relevant predictive utility, we assessed BrainFIBRE's performance on SINGER across age, mean cortical thickness, white matter hyperintensity (WMH) volume, and processing speed (Proc.Speed) prediction. As shown in Table \ref{tab:comparison_asian}, BrainFIBRE outperforms all baseline models on all four tasks. This robust advantage in an ethnically distinct cohort underscores the model’s resilience to population heterogeneity. Importantly, it suggests that BrainFIBRE learns representations that capture fundamental, broadly transferable microstructural features, rather than leveraging cohort- or dataset-specific biases.

\subsubsection{Task-Specific Modality Interaction Patterns} To investigate whether and how the interaction patterns between the three brain microstructural measures vary across different datasets and tasks, we visualized the test-set distributions of the five expert weights assigned by the Re-Weighter across four tasks in Fig.\ref{fig:weights}. In the UKB dataset, the synergy expert consistently exhibits the highest average weight, indicating a strong reliance on emergent information fused jointly from all three NODDI maps. Among the uniqueness experts, age prediction relies more heavily on FWF than NDI or ODI, suggesting free water fraction is a more prominent age-related microstructural marker. On the other hand, ODI dominates hippocampal atrophy prediction, suggesting that alterations in neurite dispersion are highly indicative of regional neurodegeneration. 

The Asian dataset exhibits a more uniform expert weight distribution with greater participant variance. While age prediction remains synergy-dominated, it exhibits stronger ODI uniqueness contribution than in UKB, suggesting microstructural alterations in dispersion (potentially reflecting changes in fibre architecture and dendritic organization) provide additional aging-relevant signal in this cohort enriched for cardiometabolic risk. In contrast, the interaction pattern shifts markedly for WMH volume prediction: FWF uniqueness becomes the primary contributor, followed by redundancy and synergy, with minimal reliance on ODI. This pattern suggests that WMH burden is driven predominantly by extracellular water expansion, consistent with established neuropathological findings\cite{ji2017distinct}.

\begin{figure}[t]
    \centering
    \includegraphics[width=1\linewidth]{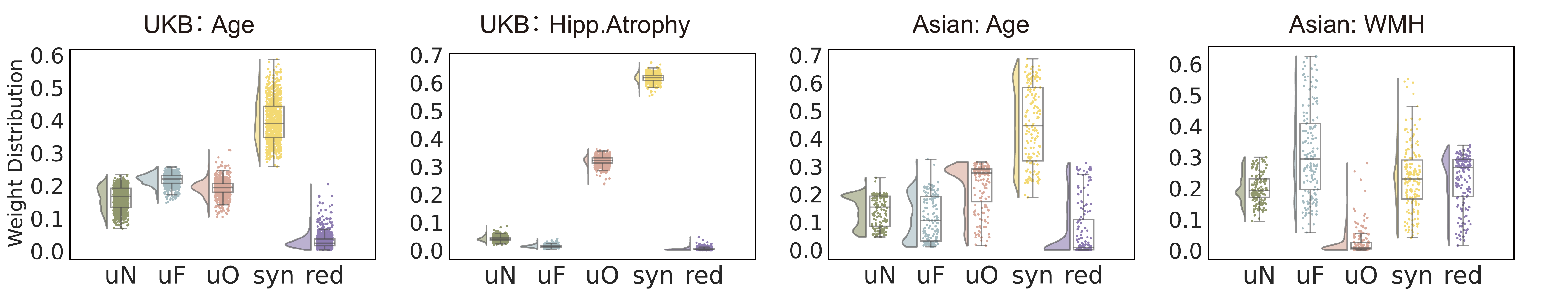}
    \caption{\textbf{Distribution of expert weights across the test set}. The y-axis of each plot illustrates the weight distribution of each of the five experts for a specific task. These include three uniqueness experts (uN, uF, uO), one synergy expert (syn), and one redundancy expert (red).}
    \label{fig:weights}
\end{figure}

\begin{figure}[t]
    \centering
    \includegraphics[width=1\linewidth]{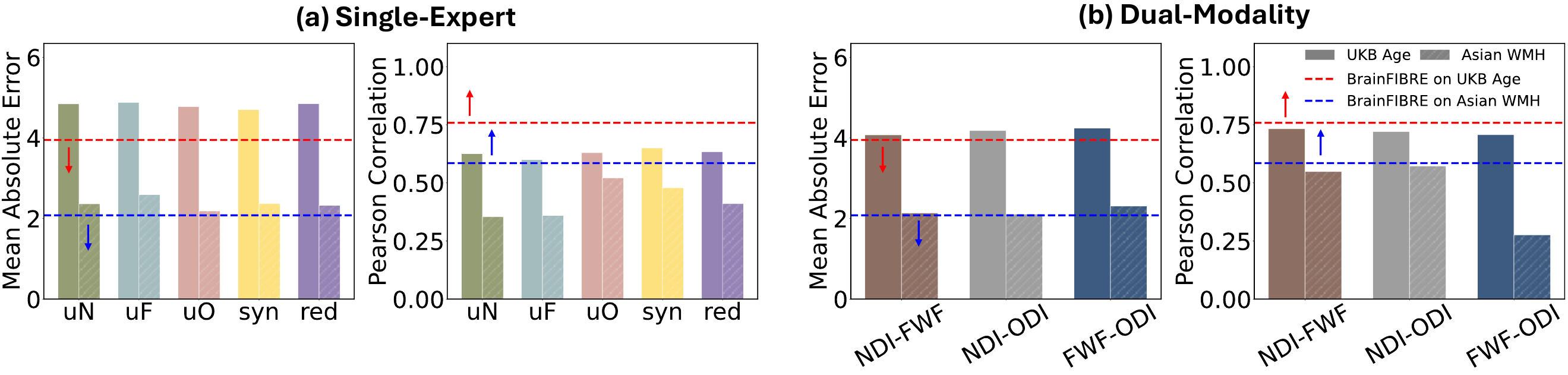}
    \caption{\textbf{Ablation study on age (UKB) and WMH volume (Asian) prediction tasks}. We compared the full BrainFIBRE model (using the final fused embedding) against two settings: (a) Single-Expert (using individual expert embeddings), and (b) Dual-Modality (using fused embeddings derived from only two input NODDI maps).}
    \label{fig:ablation}
\end{figure}

\subsection{Ablation Study}
To investigate the effectiveness of different interaction experts in BrainFIBRE, we conducted an ablation study comparing three settings: (1) \textit{BrainFIBRE}: using the fully fused representation from all experts; (2)\textit{Single-Expert}: evaluating each expert independently, and (3) \textit{Dual-Modality}: omitting one NODDI map to rely on two modality encoders and four corresponding experts (two uniqueness, one synergy, and one redundancy). As shown in Fig.\ref{fig:ablation}, the fused representation consistently outperforms any single expert. This indicates that our model benefits from integrating information from all modality interactions rather than relying on a single type of interaction alone. On the other hand, the dual-modality settings also underperform compared to BrainFIBRE, where dropping any of the modality would degrade the model performance.

Collectively, these results demonstrate that explicitly modeling and aggregating unique, redundant, and synergistic components leads to more robust and generalizable representations across heterogeneous clinical tasks. Additional ablations on individual loss terms are provided in the supplementary materials.


\begin{figure}[t]
    \centering
    \includegraphics[width=1\linewidth]{figures/fig_brain3d.pdf}
    \caption{\textbf{Spatial attention patterns captured by different interaction experts}. For both (a) UKB age and (b) Asian age prediction, salient regions are highlighted on both the cortical surface (peripheral views) and white matter tracts (central view).}
    \label{fig:brain3d}
\end{figure}

\subsection{Interpretation}

\subsubsection{Salient Brain Regions Identified by Different Experts} As shown in Fig.\ref{fig:brain3d}, different experts captured distinct spatial patterns across NODDI compartments. For age prediction in UKB, in grey matter (GM), uniqueness experts revealed modality-specific patterns: NDI was localized to the medial temporal cortex, ODI to more posterior regions, and FWF spanned the broadest cortical extent, while synergy concentrated in medial temporal and insular areas. In white matter (WM), age prediction was driven by limbic and association tracts, with FWF showing widespread sensitivity and NDI\&ODI more selective involvement. Synergy and redundancy experts consistently highlighted the cingulum as a hub of multifactorial microstructural aging. These spatial patterns align with prior lifespan diffusion studies reporting medial temporal NDI decline \cite{merluzzi2016age}, widespread free water increases \cite{chad2018re}, and preferential vulnerability of limbic association tracts such as cingulum during healthy aging \cite{peters2014age}.

For the Asian dataset, in GM, uniqueness experts showed a relative shift toward frontal-parietal, midline, and posterior cortical involvement compared with UKB, corresponding to executive control, sensorimotor, and default mode networks, while WM signatures remained largely consistent. This pattern is consistent with vascular-sensitive network vulnerability \cite{ji2023associations, ji2024heart}, potentially reflecting enrichment of cardiometabolic risk factors in the cohort. We provide additional interpretative analyses for other tasks in the supplementary materials.


\section{Conclusion}
We introduce BrainFIBRE, the first foundation model for brain tissue microstructure, pretrained on NODDI-derived maps from UK Biobank. By treating neurite density, orientation dispersion, and free water fraction as distinct modalities, we propose Self-supervised Partial Information Decomposition (SPID) with Counterfactual Candidate Construction (CCC) to train an MoE architecture that explicitly isolates unique, redundant, and synergistic information — without supervision. Evaluations across demographically and ethnically diverse cohorts demonstrates state-of-the-art performance on predicting brain age, cerebrovascular and neurodegenerative markers, and cognitive performance, while the learned representations yield neurobiologically interpretable patterns consistent with established neuropathological findings. These results highlight the value of principled multimodal interaction modeling at the microstructural level and position BrainFIBRE as a versatile foundation for clinical neuroimaging applications.

\section*{Acknowledgments} 

\begin{sloppypar}
This study was supported by the Singapore National Medical Research Council
(NMRC\slash OFLCG19May-0035, NMRC\slash CIRG\slash 1485\slash 2018,
NMRC\slash CSA-SI\slash 0007\slash 2016, NMRC\slash MOH-00707-01,
NMRC\slash CG\slash 435 M009\slash 2017-NUH\slash NUHS, CIRG21nov-0007,
HLCA23Feb-0004, OFIRG24Jul-0049, Human Potential Joint Grant HPJGC25-0002),
A*STAR Singapore (AME Programmatic Fund A20G8b0102, Human Potential Joint
Grant H25P3M00001, HHP Industry-Alignment Fund H24J4a0143), Ministry of
Education (MOE-T2EP20223-0013), and Yong Loo Lin School of Medicine Research
Core Funding, National University of Singapore, Singapore. We thank Christopher Chen, Nagaendran Kandiah, Kwong Hsia Yap, and their teams for participant recruitment and cognitive assessment in SINGER. We also thank Fiona Goh and other lab members for imaging data collection and quality control.

\end{sloppypar}

\newpage

%
%
\bibliographystyle{splncs04}
\bibliography{main}

\clearpage
\appendix

\section{Theoretical Grounding of SPID as a Self-Supervised PID Objective}
\label{sec:spid_pid_grounding}

\setcounter{theorem}{0}

For notational brevity, we write \(O,N,F\) for ODI, NDI, and FWF, respectively.
Classical partial information decomposition (PID) is defined with respect to a target variable.
In our self-supervised setting, the natural target is the latent participant-specific state that generates both augmented views.
We next show that the Counterfactual Candidate Construction (CCC) rules induce five expert-specific sufficient statistics Corresponding to a simplified PID (without explicitly modeling pairwise inter-modal interactions following \cite{xin2025i2moe}) with three unique atoms, one redundant atom, and one synergistic atom.

Let
\[
X^v = (X_O^v, X_N^v, X_F^v), \qquad v \in \{A,B\},
\]
denote the two augmented views of the same participant, and let \(X_{-m}^v\) denote all modalities except modality \(m\).

\begin{definition}[Expert-specific latent factors]
\label{def:expert_factors}
For any perturbed triplet \(\tilde X\), define the five expert factors as follows.

For the three uniqueness experts,
\begin{equation}
C_{uO}(\tilde X)=
\begin{cases}
U_O, & \text{if the \(O\)-modality is present},\\
\bbot, & \text{if the \(O\)-modality is dropped},
\end{cases}
\label{eq:CuO}
\end{equation}
\begin{equation}
C_{uN}(\tilde X)=
\begin{cases}
U_N, & \text{if the \(N\)-modality is present},\\
\bbot, & \text{if the \(N\)-modality is dropped},
\end{cases}
\label{eq:CuN}
\end{equation}
\begin{equation}
C_{uF}(\tilde X)=
\begin{cases}
U_F, & \text{if the \(F\)-modality is present},\\
\bbot, & \text{if the \(F\)-modality is dropped}.
\end{cases}
\label{eq:CuF}
\end{equation}
In a swap perturbation, the Corresponding \(U_m\) is taken from the swapped-in participant.

For redundancy, define
\begin{equation}
C_{\mathrm{red}}(\tilde X)=
\begin{cases}
R, & \text{if all non-dropped modalities in \(\tilde X\)}\\
   & \text{originate from the same participant},\\
\bbot, & \text{otherwise}.
\end{cases}
\label{eq:Cred}
\end{equation}

For synergy, define
\begin{equation}
C_{\mathrm{syn}}(\tilde X)=
\begin{cases}
S, & \text{if \(\tilde X\) contains all three modalities}\\
   & \text{and they are aligned to the same participant},\\
\bbot, & \text{otherwise}.
\end{cases}
\label{eq:Csyn}
\end{equation}
Here, \(\bbot\) denotes that the Corresponding factor is destroyed or undefined.
\end{definition}

For each expert \(k \in \{uO,uN,uF,\mathrm{red},\mathrm{syn}\}\) and participant \(i\), let \(\mathcal{P}_k(i)\) denote the positive candidate set specified by the CCC expert-conditioning rules in Table~1 of the main paper.

\begin{assumption}[Implicit target and PID state]
\label{ass:coarse_pid_state}
Let \(Y\) denote the latent participant state. Assume that
\[
Y = (U_O,U_N,U_F,R,S),
\]
where \(U_O,U_N,U_F,R,S\) are mutually independent latent factors with the following semantics:
\begin{align}
H(U_m \mid X_m^v) &= 0, \qquad
I(U_m; X_{-m}^v) = 0,
\qquad m \in \{O,N,F\},\ v \in \{A,B\}, \label{eq:unique_factor_assumption} \\
H(R \mid X_m^v) &= 0,
\qquad m \in \{O,N,F\},\ v \in \{A,B\}, \label{eq:redundant_factor_assumption} \\
I(S; X_m^v) &= 0, \qquad
H(S \mid X_O^v,X_N^v,X_F^v) = 0,
\qquad m \in \{O,N,F\},\ v \in \{A,B\}. \label{eq:synergy_factor_assumption}
\end{align}
Thus, \(U_m\) is unique to modality \(m\), \(R\) is redundantly present in every modality, and \(S\) is recoverable only from the aligned triplet jointly.

Assume further that the two augmented views are conditionally independent given \(Y\):
\[
X^A \indep X^B \mid Y.
\]
Under this stylized model, the nonzero atoms of the induced five-atom simplified PID of \(Y\) are
\begin{equation}
\UI_O = H(U_O), \quad
\UI_N = H(U_N), \quad
\UI_F = H(U_F), \quad
\RI = H(R), \quad
\SI = H(S).
\end{equation}
\end{assumption}

\begin{assumption}[CCC perturbations]
\label{ass:ccc_perturbations}
A dropout perturbation \(\drop_m\) replaces modality \(m\) with noise independent of \(Y\).
A swap perturbation \(\swap_m\) replaces modality \(m\) by the Corresponding modality from an independent participant \(Y' \sim p(Y)\), where \(Y' \indep Y\).
\end{assumption}

\begin{assumption}[No collisions]
\label{ass:no_collisions}
For each expert \(k \in \{uO,uN,uF,\mathrm{red},\mathrm{syn}\}\), if \(Y\) and \(Y'\) are independent participant states and both Corresponding expert factors are intact (i.e., not equal to \(\bbot\)), then
\[
\mathbb{P}\!\left(
C_k(Y) = C_k(Y')
\,\middle|\,
C_k(Y)\neq \bbot,\ C_k(Y')\neq \bbot
\right)=0.
\]
This rules out degenerate collisions where two unrelated participants have exactly the same expert-specific factor.
\end{assumption}

\begin{theorem}[Correctness of the expert interaction rules]
\label{thm:rule_Correctness}
Under Assumptions~\ref{ass:coarse_pid_state}--\ref{ass:no_collisions}, the positive sets in Table~1 of the main paper are exactly the perturbations that preserve the Corresponding expert factor \(C_k\), and the negative sets are exactly the perturbations that destroy or alter that factor almost surely.

More precisely, for participant \(i\),
\begin{equation}
z \in \mathcal{P}_{uO}(i)
\iff
C_{uO}(z)=U_O^{(i)},
\quad
z \notin \mathcal{P}_{uO}(i)
\implies
C_{uO}(z)\neq U_O^{(i)}
\ \ \text{a.s.},
\label{eq:thm1_uo}
\end{equation}
and analogously for \(uN\) and \(uF\);

\begin{equation}
z \in \mathcal{P}_{\mathrm{red}}(i)
\iff
C_{\mathrm{red}}(z)=R^{(i)},
\quad
z \notin \mathcal{P}_{\mathrm{red}}(i)
\implies
C_{\mathrm{red}}(z)\neq R^{(i)}
\ \ \text{a.s.},
\label{eq:thm1_red}
\end{equation}
and
\begin{equation}
z \in \mathcal{P}_{\mathrm{syn}}(i)
\iff
C_{\mathrm{syn}}(z)=S^{(i)},
\quad
z \notin \mathcal{P}_{\mathrm{syn}}(i)
\implies
C_{\mathrm{syn}}(z)\neq S^{(i)}
\ \ \text{a.s.}
\label{eq:thm1_syn}
\end{equation}
\end{theorem}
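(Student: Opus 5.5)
The proof is a finite case analysis over the seven candidate types in \(\mathcal{C}_i=\{b_0\}\cup\{\drop_m\}_m\cup\{\swap_m\}_m\). For each expert we evaluate \(C_k\) on each candidate directly from Definition~\ref{def:expert_factors}, and then compare the result with the anchor factor \(C_k(X^A(i))\). Since view \(A\) contains all three aligned modalities of participant \(i\), the anchor factors are \(U_O^{(i)},U_N^{(i)},U_F^{(i)},R^{(i)},S^{(i)}\). The candidates are built from view \(B\), which is a different realization. We therefore first need that each factor is a deterministic function of the modality it lives in, so that it coincides across views. This follows from Assumption~\ref{ass:coarse_pid_state}:
\begin{itemize}
\item \(H(U_m\mid X_m^v)=0\) for both \(v\) gives \(U_m=g_m^v(X_m^v)\);
\item \(H(R\mid X_m^v)=0\) gives the analogous statement for \(R\);
\item \(H(S\mid X^v)=0\) gives the analogous statement for \(S\).
\end{itemize}
Evaluated on the same participant, each recovers the same component of \(Y^{(i)}\). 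So the unperturbed \(b_0\) carries exactly \(Y^{(i)}\)'s factors.

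\emph{Uniqueness experts} (treat \(uO\); \(uN,uF\) are symmetric).
\begin{itemize}
\item On \(b_0,\drop_N,\drop_F,\swap_N,\swap_F\), the \(O\)-channel is the original \(X_O^B(i)\), so \(C_{uO}=U_O^{(i)}\). This is the forward direction of~\eqref{eq:thm1_uo}.
\item On \(\drop_O\), \(C_{uO}=\bbot\neq U_O^{(i)}\).
\item On \(\swap_O\), \(C_{uO}=U_O'\) with \(Y'\indep Y\) by Assumption~\ref{ass:ccc_perturbations}. Both factors are intact, so Assumption~\ref{ass:no_collisions} gives \(\mathbb{P}(U_O'=U_O^{(i)})=0\).
\end{itemize}
The two negative cases together give the negative implication and the converse of the iff.

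\emph{Redundancy expert.}
\begin{itemize}
\item For \(b_0\) and \(\drop_m\), all non-dropped modalities come from participant \(i\). At least two remain, and each determines \(R^{(i)}\), so \(C_{\mathrm{red}}=R^{(i)}\). Note that \(R\) being redundantly present means any single surviving modality suffices; the dropped noise is independent of \(Y\) and is not counted.
\item For \(\swap_m\), the non-dropped modalities originate from two participants, so by the definition \(C_{\mathrm{red}}=\bbot\neq R^{(i)}\). This holds deterministically.
\end{itemize}

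\emph{Synergy expert.}
\begin{itemize}
\item Only \(b_0\) contains all three modalities aligned to the same participant, so \(C_{\mathrm{syn}}(b_0)=S^{(i)}\).
\item Every \(\drop_m\) or \(\swap_m\) fails one of the two conditions in~\eqref{eq:Csyn}, so \(C_{\mathrm{syn}}=\bbot\neq S^{(i)}\).
\end{itemize}

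We will also address the remaining candidates in \(\mathcal{C}_{batch}\), namely those of other participants \(j\neq i\). Each intact factor of such a candidate belongs to an independent \(Y^{(j)}\). Assumption~\ref{ass:no_collisions} then makes any coincidence with participant \(i\)'s factor a null event, and the finite union over the batch is still null.

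The main obstacle is the precise meaning of ``\(C_k(z)\)'' and of equality with \(\bbot\), because \(C_k\) is defined on perturbation labels rather than as a measurable map of data. I would first fix the convention that \(\bbot\) is a symbol outside the range of every factor, so that \(\bbot\neq\) any intact value holds surely. I would also make explicit that positivity in Table~\ref{tab:interaction_rules} is a property of the candidate type, so each iff reduces to the type-level computation above plus the a.s. collision exclusion. A secondary subtlety is the cross-view identification \(g_m^A(X_m^A)=g_m^B(X_m^B)\). I will justify it by noting that both equal the same coordinate of \(Y^{(i)}\), which is exactly how Assumption~\ref{ass:coarse_pid_state} is stated.
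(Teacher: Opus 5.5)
Your case analysis over the seven candidate types in $\mathcal{C}_i$ is the paper's proof. The forward direction is read off Definition~\ref{def:expert_factors}, $\drop_O$ gives $\bbot$, $\swap_O$ is handled by Assumption~\ref{ass:no_collisions}, and $C_{\mathrm{red}}$ and $C_{\mathrm{syn}}$ equal $\bbot$ on every negative by definition. Your extra care is harmless: fixing $\bbot$ outside every factor's range, and the cross-view identification via $H(U_m\mid X_m^v)=0$. It is not strictly needed, since $C_k$ is defined by which participant supplies each modality.

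The one step that fails is your extension to candidates of other participants $j\neq i$ in $\mathcal{C}_{batch}$. It is not true that every intact factor of such a candidate comes from a state independent of $Y^{(i)}$. In $\swap_O(j)$ the $O$-channel comes from $j$'s swap partner, and in the paper's construction that partner is drawn from the same batch, so it can be participant $i$. Assumption~\ref{ass:ccc_perturbations} only makes the swapped-in state independent of $Y^{(j)}$, and $Y^{(i)}\indep Y^{(j)}$, so this case is allowed. Then $C_{uO}(\swap_O(j))=U_O^{(i)}$ exactly, with positive probability, even though $\swap_O(j)\notin\mathcal{P}_{uO}(i)$. This is a genuine false negative for the uniqueness experts, and Assumption~\ref{ass:no_collisions} cannot exclude it because the two factors are identical, not independent. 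The redundancy and synergy parts of your extension are fine, since every swap candidate has $C_{\mathrm{red}}=C_{\mathrm{syn}}=\bbot$.

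You have two ways to fix this. You can drop that paragraph and state explicitly that $z$ ranges over the candidate types of $\mathcal{C}_i$ listed in Table~\ref{tab:interaction_rules}; that is all the theorem asserts, and it is what the paper's proof covers. Alternatively, you can add an assumption that swap-in participants are fresh draws independent of every anchor in the batch, which the in-batch implementation does not satisfy.
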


\begin{proof}
For the \(uO\) expert, the positive set is
\[
\{b_0,\drop_N,\drop_F,\swap_N,\swap_F\}.
\]
In all these perturbations, the \(O\)-modality still comes from participant \(i\), hence
\[
C_{uO}=U_O^{(i)}.
\]
In contrast, \(\drop_O\) removes the \(O\)-modality, so \(C_{uO}=\bbot\), while \(\swap_O\) replaces it by the \(O\)-modality from an independent participant \(j\), so
\[
C_{uO}=U_O^{(j)} \neq U_O^{(i)}
\quad \text{a.s.}
\]
by Assumption~\ref{ass:no_collisions}. This proves \eqref{eq:thm1_uo}. The arguments for \(uN\) and \(uF\) are identical.

For the redundancy expert, the positive set is
\[
\{b_0,\drop_O,\drop_N,\drop_F\}.
\]
These perturbations preserve participant consistency across all surviving modalities, so the same redundant factor \(R^{(i)}\) remains shared among them, implying
\[
C_{\mathrm{red}}(z)=R^{(i)}.
\]
In contrast, any \(\swap_m\) makes at least one surviving modality originate from a different participant, hence there is no single factor shared across all surviving modalities, and therefore
\[
C_{\mathrm{red}}(z)=\bbot \neq R^{(i)}.
\]
This proves \eqref{eq:thm1_red}.

For the synergy expert, the positive set is
\[
\{b_0\}.
\]
Only \(b_0\) contains all three modalities and keeps them aligned to the same participant, hence only \(b_0\) preserves \(S^{(i)}\). Any \(\drop_m\) removes a necessary modality, and any \(\swap_m\) breaks cross-modal alignment, so in both cases
\[
C_{\mathrm{syn}}(z)=\bbot \neq S^{(i)}.
\]
This proves \eqref{eq:thm1_syn}. Therefore, the CCC expert-conditioning rules preserve exactly the designated expert factors and reject precisely those perturbations that destroy them. 
\end{proof}

\begin{theorem}[The SPID interaction task admits a PID-factorized optimum]
\label{thm:factorized_optimum}
Let $x$ denote the anchor triplet, let \(Q_k=f_k(X)\) be the representation learned by expert \(k\).
Consider the population version of the expert-wise contrastive task induced by the interaction loss in the main paper (Eq.~(8)), i.e., the task of assigning high score to candidates in \(\mathcal{P}_k(i)\) and low score to the other candidates in the CCC pool.
Under Assumptions~\ref{ass:coarse_pid_state}--\ref{ass:no_collisions}, there exists a global optimum of this task of the form
\begin{align}
Q_{uO} &= T_{uO}(U_O), &
Q_{uN} &= T_{uN}(U_N), &
Q_{uF} &= T_{uF}(U_F), \label{eq:factorized_unique} \\
Q_{\mathrm{red}} &= T_{\mathrm{red}}(R), &
Q_{\mathrm{syn}} &= T_{\mathrm{syn}}(S), \label{eq:factorized_shared}
\end{align}
where each \(T_k\) is injective.

Equivalently, each expert can solve its contrastive task using only its designated PID factor.
\end{theorem}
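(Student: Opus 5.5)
The plan is to reduce the population contrastive task to a classification problem: decide, for each candidate, whether it lies in \(\mathcal{P}_k(i)\). Then I would exhibit a scoring rule built only from the factor \(C_k\) that attains the best possible (Bayes-optimal) value. The key input is Theorem~\ref{thm:rule_Correctness}. It says that, almost surely, the event \(z\in\mathcal{P}_k(i)\) coincides with the event \(C_k(z)=C_k(x^{(i)})\), where \(C_k(x^{(i)})\in\{U_O^{(i)},U_N^{(i)},U_F^{(i)},R^{(i)},S^{(i)}\}\) is the designated factor of the anchor. So the label that the task asks us to predict is a deterministic function of the pair \((C_k(\text{anchor}),C_k(z))\), and consequently no representation can beat one that makes this pair available.

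The first step is to show that the designated factor can actually be computed from the inputs each expert sees. By Assumption~\ref{ass:coarse_pid_state}:
\begin{itemize}
\item \(H(U_m\mid X_m^v)=0\), so \(U_m\) is a function of the modality-\(m\) slot of the anchor view and of any candidate in which that slot is intact;
\item \(H(R\mid X_m^v)=0\), so \(R\) can be read off any single surviving modality;
\item \(H(S\mid X_O^v,X_N^v,X_F^v)=0\), so \(S\) is a function of the aligned triplet.
\end{itemize}
I would then define \(f_k(\tilde X)=T_k(C_k(\tilde X))\), where \(T_k\) is any injective embedding of the factor's range plus a point for \(\bbot\). For concreteness, take a map onto well-separated unit vectors, which is possible since the range is countable, or approximate it in the continuous case.

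There is a subtlety in computing \(C_{\mathrm{red}}\) and \(C_{\mathrm{syn}}\) on perturbed candidates. The expert must detect whether the surviving modalities are mutually consistent. Assumption~\ref{ass:no_collisions} handles this: the redundant factors extracted from the surviving modalities agree if and only if they come from the same participant, almost surely. This settles consistency for \(C_{\mathrm{red}}\). For \(C_{\mathrm{syn}}\), the same \(R\)-agreement check across all three slots, together with the absence of a noise slot, decides whether the triplet is aligned, so \(C_{\mathrm{syn}}\) is also a measurable function of the input. Dropped slots contain noise independent of \(Y\) (Assumption~\ref{ass:ccc_perturbations}); I would assume the noise is distinguishable from genuine embeddings with probability one.

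The next step is the optimality argument. With \(\operatorname{sim}\) taken as cosine similarity between injective, well-separated codes, \(\operatorname{sim}(Q_k^A(i),z)\) is maximal exactly when \(C_k(z)=C_k(x^{(i)})\), and strictly smaller otherwise. This gives two cases.
\begin{itemize}
\item \textbf{Positives.} These share the factor, by Theorem~\ref{thm:rule_Correctness}.
\item \textbf{Negatives.} Candidates in the CCC pool of participant \(i\), and all candidates from other participants \(j\), have a different factor almost surely. For other participants this follows from no collisions between independent participant states.
\end{itemize}
So the score is a perfect separator almost surely, and as the separation margin grows relative to \(\tau\) the InfoNCE objective approaches its infimum, which is \(-\log 1=0\) per term. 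Any other representation is bounded below by that same infimum, so the factorized solution is a global optimizer, or achieves the infimum in the limit.

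Conditional independence \(X^A\indep X^B\mid Y\) ensures the anchor's factor, computed from view \(A\), equals the candidate's factor from view \(B\) whenever both are intact. This holds because each factor is a deterministic function of \(Y\) readable from either view.

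The step I expect to be the main obstacle is the infimum-versus-attainment issue for InfoNCE at finite \(\tau\), together with the measurability of the consistency checks. I would first handle the discrete case, where well-separated codes attain perfect separation. I would then state that in general the optimum is attained by the reduced problem, which is the argument for a "global optimum of this task" in the classification sense. The other candidate obstacle is that the independence of \(U_O,U_N,U_F,R,S\) is what makes using only \(C_k\) lossless. It implies that no other factor carries information about the label beyond what \(C_k\) provides, and I would cite it explicitly so that the reduction to the designated factor is justified.
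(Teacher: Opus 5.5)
Your proposal is correct and is essentially the paper's proof. Theorem~\ref{thm:rule_Correctness} makes $z\in\mathcal{P}_k(i)$ equivalent (a.s.) to $C_k(z)=C_k(x)$, so taking $Q_k=T_k(C_k)$ with injective $T_k$ and scoring by equality of codes separates positives from negatives, and letting the separation grow gives a global optimum of the task in the classification sense.

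Two small corrections. First, with cosine similarity at fixed $\tau$ the margin is bounded, so you cannot claim the InfoNCE loss tends to $0$; use the paper's unbounded score $s_k=\pm\alpha$ with $\alpha\to\infty$ instead (Assumption~\ref{ass:no_collisions} also forces every factor to be atomless, so your ``discrete case'' with well-separated codes never occurs). Second, your realizability check for $C_{\mathrm{red}}$ needs a noise-distinguishability hypothesis outside Assumptions~\ref{ass:coarse_pid_state}--\ref{ass:no_collisions}; the paper avoids this by defining $Q_k$ directly through the provenance-based factor $C_k$.
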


\begin{proof}
Fix an expert \(k\). By Theorem~\ref{thm:rule_Correctness}, for participant \(i\), a candidate is positive if and only if it preserves the same factor \(C_k\) as the anchor, while every negative candidate has a different factor almost surely.

Now choose any injective map \(T_k\) and define
\[
Q_k = T_k(C_k).
\]
Let $x$ denote the anchor Corresponding to participant $i$, and let $z$ denote any candidate from the CCC pool. Because $T_k$ is injective,
\[
Q_k(x)=Q_k(z)
\iff
C_k(x)=C_k(z).
\]
Define a score function
\[
s_k(x,z)=
\begin{cases}
+\alpha, & \text{if } Q_k(x)=Q_k(z),\\
-\alpha, & \text{if } Q_k(x)\neq Q_k(z),
\end{cases}
\qquad \alpha>0.
\]
Under this score, every positive candidate receives strictly larger similarity than every negative candidate. Taking \(\alpha \to \infty\) yields perfect separation between positives and negatives, hence a global optimum of the population contrastive classification problem.

Therefore, expert \(k\) can solve its task using only the factor \(C_k\). Applying the same construction to all five experts yields the factorized solution
\begin{multline*}
(Q_{uO},Q_{uN},Q_{uF},Q_{\mathrm{red}},Q_{\mathrm{syn}})\\
=
(T_{uO}(U_O),T_{uN}(U_N),T_{uF}(U_F),T_{\mathrm{red}}(R),T_{\mathrm{syn}}(S)).
\end{multline*}
This proves the claim. 
\end{proof}

\begin{corollary}[SPID realizes a simplified PID decomposition]
\label{cor:coarse_pid_decomposition}
Suppose training converges to the factorized solution of Theorem~\ref{thm:factorized_optimum}. Let
\[
Q=
(Q_{uO},Q_{uN},Q_{uF},Q_{\mathrm{red}},Q_{\mathrm{syn}}).
\]
Then \(Q\) is an injective transform of
\[
(U_O,U_N,U_F,R,S),
\]
so that
\[
H(Y \mid Q)=0.
\]
Consequently,
\begin{equation}
I(Y;Q)=H(Y)=H(U_O)+H(U_N)+H(U_F)+H(R)+H(S),
\end{equation}
where the five summands are exactly the three unique atoms, one redundant atom, and one synergistic atom of the simplified PID.
\end{corollary}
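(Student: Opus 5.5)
The plan is to derive the Corollary directly from the factorized form in Theorem~\ref{thm:factorized_optimum} together with the independence structure in Assumption~\ref{ass:coarse_pid_state}.

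\textbf{Step 1: $Q$ is an injective transform of $Y$.} At the factorized solution, each component is
\[
Q_k = T_k(C_k),
\]
where $T_k$ is injective and $C_k$ is, on the unperturbed anchor, the intact factor. Here $C_k$ equals $U_O$, $U_N$, $U_F$, $R$, or $S$; none is $\bbot$, since the anchor contains all three aligned modalities. The product map
\[
T := T_{uO}\times T_{uN}\times T_{uF}\times T_{\mathrm{red}}\times T_{\mathrm{syn}}
\]
is then injective on the product space. Hence $Q = T(U_O,U_N,U_F,R,S) = T(Y)$, with $Y=(U_O,U_N,U_F,R,S)$ as in Assumption~\ref{ass:coarse_pid_state}.

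\textbf{Step 2: conditional entropies.} Because $T$ is injective, $Y = T^{-1}(Q)$ is a deterministic function of $Q$, so $H(Y\mid Q)=0$. The identity
\[
I(Y;Q) = H(Y) - H(Y\mid Q)
\]
then gives $I(Y;Q)=H(Y)$.

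\textbf{Step 3: additivity of entropy.} The five factors are mutually independent by Assumption~\ref{ass:coarse_pid_state}. Applying the chain rule, each conditional term reduces to a marginal, so
\[
H(Y) = H(U_O)+H(U_N)+H(U_F)+H(R)+H(S).
\]
Matching these summands with $\UI_O,\UI_N,\UI_F,\RI,\SI$, as stated at the end of Assumption~\ref{ass:coarse_pid_state}, completes the identification with the simplified PID atoms.

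\textbf{Main obstacle: making Step 1 rigorous.} Three issues need care.
\begin{enumerate}
\item Theorem~\ref{thm:factorized_optimum} shows only that a factorized optimum exists. The Corollary therefore has to be read as conditional on convergence to that specific form, which is how its hypothesis is phrased.
\item Entropies of possibly continuous latents must be interpreted suitably. I would assume discrete or finite-entropy factors, or else replace entropy by mutual information with a statement that is invariant under injective maps.
\item Injectivity must be used only on the anchor distribution, where no $\bbot$ values occur.
\end{enumerate}
The remaining steps are routine information-theoretic identities.
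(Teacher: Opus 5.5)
Your proposal is correct and follows essentially the same argument as the paper: the componentwise map $T_{uO}\times T_{uN}\times T_{uF}\times T_{\mathrm{red}}\times T_{\mathrm{syn}}$ is injective, which gives $H(Y\mid Q)=0$ and hence $I(Y;Q)=H(Y)$, and mutual independence then splits $H(Y)$ into the five atoms. Your added caveats are sensible refinements that the paper leaves implicit: reading the hypothesis as convergence to the factorized form, treating the factors as discrete or finite-entropy, and applying injectivity only on the intact anchor.
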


\begin{proof}
By Theorem~\ref{thm:factorized_optimum}, each expert output is an injective function of one and only one latent factor. Hence the tuple \(Q\) is an injective function of \((U_O,U_N,U_F,R,S)\), and therefore of \(Y\). It follows that
\[
H(Y \mid Q)=0.
\]
Since the factors are mutually independent by Assumption~\ref{ass:coarse_pid_state},
\[
H(Y)=H(U_O)+H(U_N)+H(U_F)+H(R)+H(S).
\]
Therefore,
\[
I(Y;Q)=H(Y)-H(Y \mid Q)=H(Y),
\]
which is precisely the claimed five-term decomposition. 
\end{proof}

\paragraph{Interpretation.}
Theorems~\ref{thm:rule_Correctness} and \ref{thm:factorized_optimum} show that the CCC rules used by SPID are not heuristic.
They define five perturbation-invariant sufficient statistics, and under Assumption~\ref{ass:coarse_pid_state}, these sufficient statistics coincide with the five PID factors.
Therefore, the interaction loss in the main paper is mathematically grounded as a \emph{self-supervised surrogate for a simplified PID decomposition} of an implicit latent participant state.

\begin{remark}[Scope of the claim]
\label{rem:scope}
The mathematically precise claim is that SPID is \emph{PID-grounded at the level of a five-atom coarse decomposition}.
It does not claim exact recovery of the full three-source PID lattice, which is strictly finer than the five experts used here.
Moreover, the present argument is a population-level grounding of the design; it does not imply finite-sample identifiability, nor does it require that optimization necessarily converges to the factorized optimum in practice.
\end{remark}


\section{Empirical Validation of the SPID Disentanglement}
Section~\ref{sec:spid_pid_grounding} has established the theoretical grounding of SPID as a self-supervised PID objective, mathematically linking formal PID quantities (unique, redundant, and synergistic information) to our architectural design. Specifically, Theorem 1 proves that the CCC rules effectively isolate target factors via positive and negative perturbations, and Theorem 2 guarantees the existence of a factorized global optimum where experts solve contrastive tasks solely relying on their intended PID components. 

To empirically validate whether the proposed network architecture learns the PID components, we conduct a post-hoc representation alignment analysis on the held-out UK Biobank (UKB) test set using Canonical Correlation Analysis (CCA). To highlight the relative interaction preferences and mitigate the global baseline Correlation caused by intrinsic anatomical collinearity, the raw CCA scores are global $z$-score normalized across the entire $3 \times 5$ Correlation matrix (Figure~\ref{fig:heeatmap}).

\begin{figure}
    \centering
    \includegraphics[width=1\linewidth]{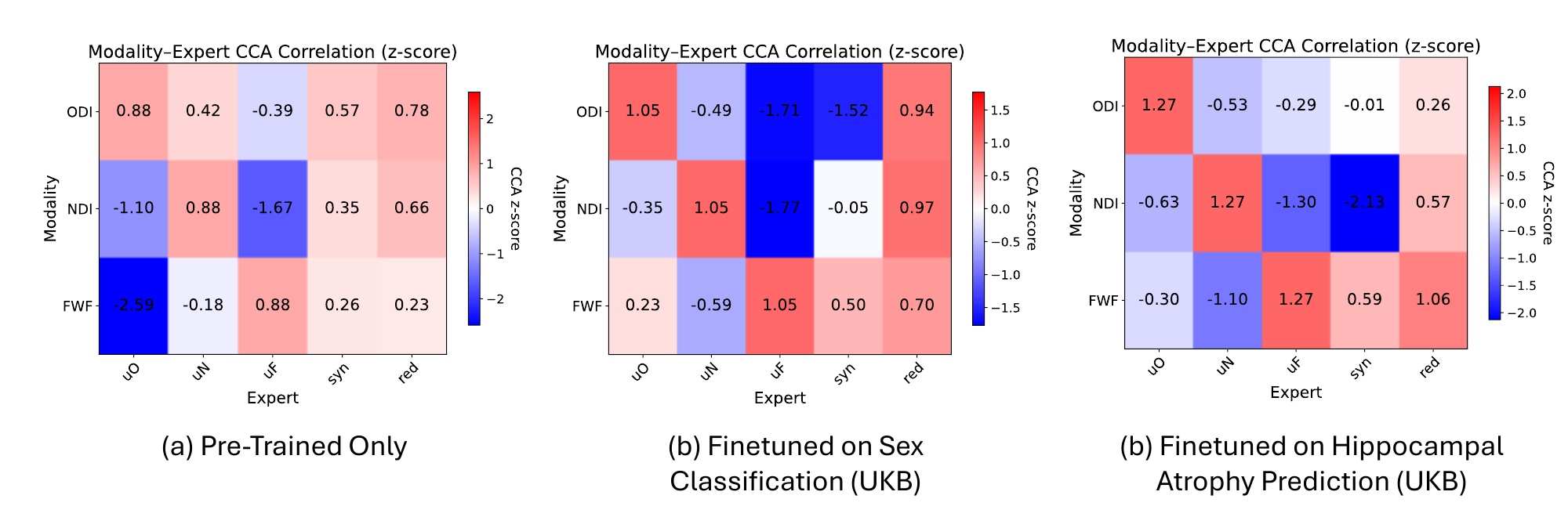}
    \caption{Empirical validation of SPID disentanglement via z-score normalized CCA.}
    \label{fig:heeatmap}
\end{figure}

In this normalized space, the absolute magnitude of the $z$-score represents the significance of deviation from the average baseline alignment. Specifically, a large positive value indicates a prominent Correlation, implying stronger feature alignment. Conversely, a large negative value denotes an exceptionally weak Correlation, indicating feature selection or orthogonalization. A value near $0$ suggests an average alignment with no distinct preference or repulsion.

We extracted the modality embeddings \(h_m\) and expert embeddings \(q_k\) under three different network states: (a) utilizing the frozen pre-trained weights without any downstream finetuning, (b) after finetuning on the Sex Classification task, and (c) after finetuning on the Hippocampal Atrophy Prediction task. As shown in Figure~\ref{fig:heeatmap}, similar representation patterns emerge across all states. Each Uniqueness Expert ($uO, uN, uF$) exhibits the highest positive $z$-score exclusively with its Corresponding modality embedding, while showing much smaller or even negative $z$-scores with the other modalities. For Synergy Expert ($syn$), it exhibits much lower Correlation scores with individual modalities, sometimes dropping to near-zero or strongly negative values (\emph{e.g.}, $-1.52$ and $-2.13$ for ODI and NDI in finetuned tasks). This suggests the emergence of synergistic information: since synergy relies on the modality interactions from the joint combination of all modalities, it should be less aligned with or decoded from any single isolated modality. For Redundancy Expert ($red$), it consistently demonstrates elevated, positive $z$-scores across all three individual modality embeddings. This indicates that $q_{red}$ extracts the shared intersection of information that is simultaneously present in and extractable from every single modality.

\begin{figure}[t]
    \centering
    \includegraphics[width=.8\linewidth]{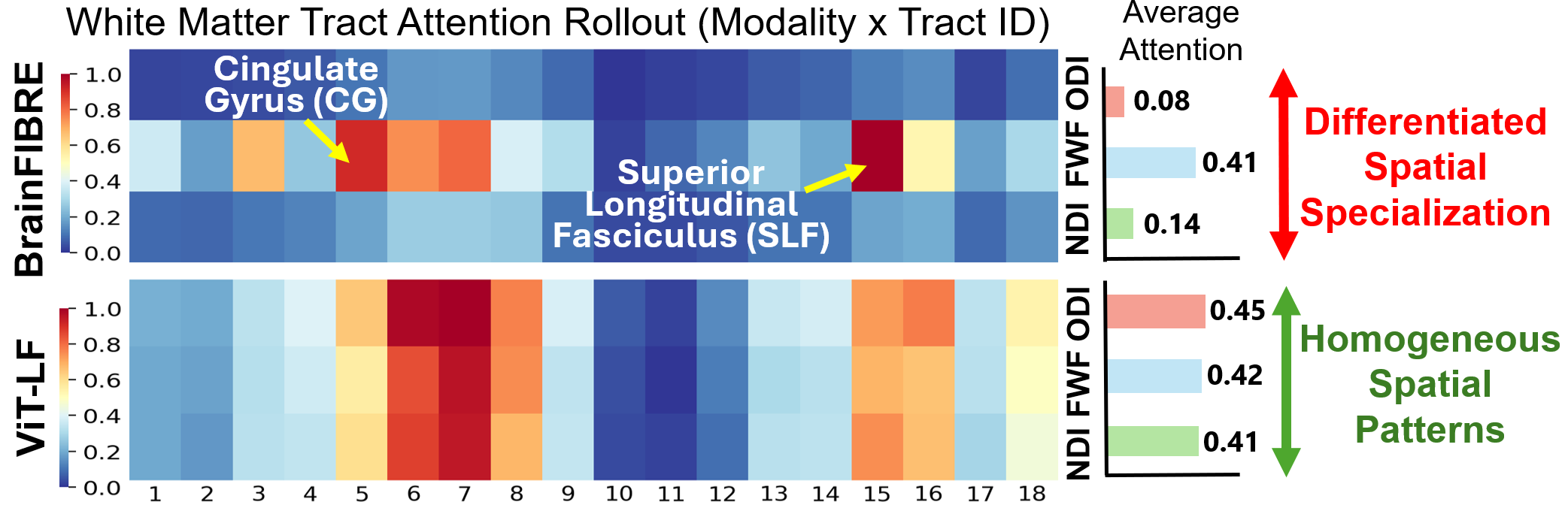}
    \caption{\textbf{White matter tract attention rollout.} BrainFIBRE learns differentiated modality–tract specialization, whereas ViT-LF produces more homogeneous responses.}
    \label{rollout}
\end{figure}

Fig.\ref{rollout} further shows that BrainFIBRE learns disease-relevant modality–tract specialization for WMH prediction. Unlike the late-fusion baseline, which produces relatively homogeneous tract attention, BrainFIBRE selectively emphasizes FWF-related patterns in the cingulate gyrus/cingulum and superior longitudinal fasciculus.


\section{Imaging Acquisition \& Preprocessing}

Diffusion MRI was acquired using multi-shell spin-echo echo-planar imaging protocols optimized for microstructural modeling. In UK Biobank, data were collected on identical 3T Siemens MAGNETOM Skyra scanners using dual diffusion weightings (b = 1000 and 2000 s/mm²) with 50 diffusion directions per shell, and 2 mm isotropic spatial resolution \cite{alfaro2018image}. The Asian cohort was scanned on a 3T Siemens MAGNETOM Prisma system using a dual-shell diffusion scheme (b = 1000 and 2000 s/mm², 50 directions per shell) with 1.8 mm isotropic resolution. The Human Connectome Project in Aging (HCP-A) dataset was acquired on a Siemens MAGNETOM Prisma 3T scanner using high-resolution multi-shell diffusion encoding (b = 1500 and 3000 s/mm², approximately 90 directions per shell) with 1.5 mm isotropic spatial resolution \cite{glasser2013minimal}. Across cohorts, multiple b0 volumes and reversed phase-encoding acquisitions were obtained to enable Correction of susceptibility-induced distortions.

Diffusion-weighted images from all cohorts underwent standardized preprocessing prior to NODDI model fitting. Preprocessing included Correction for susceptibility-induced distortions using reversed phase-encoding b0 images, followed by joint eddy-current and rigid-body motion Correction with slice-wise outlier detection using FSL TOPUP and EDDY \cite{alfaro2018image,glasser2013minimal}. Additional processing steps included gradient nonlinearity Correction (HCP-A), brain extraction, and spatial normalization \cite{ji2023associations}. NODDI parameters were estimated using a consistent fitting framework to generate NDI, ODI, and FWF maps. Quality control combined automated metrics summarizing motion, signal dropout, and model residuals with systematic visual inspection of raw diffusion volumes, Corrected images, and derived parameter maps. Scans failing either automated or visual quality criteria were excluded from downstream analyses. All NODDI maps were aligned to the JHU ICBM-DTI-81 White-Matter Labels 2 mm Atlas for subsequent analyses.


\section{Implementation Details}

\textit{Pre-Training Configurations.} The hyperparameter settings and implementation details for self-supervised pretraining are provided in Table \ref{tab:hyperparameters}. Specifically, we adopted ViT-Small as the backbone for the unimodal encoder. The entire pre-training process utilized eight H200 GPUs, each with 140GB of memory.

\textit{Data Augmentation.} During the pre-training phase, we employ spatial data augmentations for counterfactural candidate construction. We utilized one fixed random seed for each view (A/B) across a given batch. Specifically, we applied random 3D affine transformation with a probability of $p=0.5$. This transformation introduces random rotations within $\pm 10^\circ$, spatial scaling within $\pm 10\%$, and translations of up to $\pm 3$ voxels along all three spatial axes. Any out-of-bounds voxels resulting from the affine transformations are handled using border padding. Furthermore, a random spatial flip is applied along the first spatial axis with a probability of $p=0.2$.

\begin{table}[t]
    \centering
    \caption{Implementation details and hyperparameters for self-supervised pretraining.}
    \label{tab:hyperparameters}
    
    \fontsize{7pt}{9pt}\selectfont
    \renewcommand{\arraystretch}{1}
    \setlength{\tabcolsep}{8pt} 
    
    \begin{tabular}{ll}
        \toprule
        \textbf{Configuration} & \textbf{Value} \\
        \midrule
        
        \multicolumn{2}{l}{\textit{Architecture \& Input}} \\
        Input Size  & $96 \times 112 \times 96$\\
        Patch Size & $16 \times 16 \times 16$ \\
        Encoder Embedding Dim (\(d_{enc}\)) & 384  \\
        Expert Embedding Dim (\(d_{exp}\)) & 128 \\
        
        \midrule
        \multicolumn{2}{l}{\textit{Optimization}} \\
        Training Epochs (Warmup) & 200 (5) \\
        Batch Size & 280 \\
        Learning Rate & $2 \times 10^{-4}$ \\
        Weight Decay & $1 \times 10^{-4}$ \\
        Optimizer Betas ($\beta_1, \beta_2$) & 0.9, 0.999 \\
        Gradient Clipping & 1.0 \\
        
        \midrule
        \multicolumn{2}{l}{\textit{Loss Function Parameters}} \\
        Temperature ($\tau_{cont}$, $\tau_{inter}$) & 0.2, 0.12 \\
        Weights ($\beta_{balance}$, $\gamma_{entropy}$) & 0.05, 0.01 \\
        
        \midrule
        \multicolumn{2}{l}{\textit{Data Augmentation}} \\
        Random Flip & Spatial Axis 0 \\
        Random Affine & Rotation $\pm10^{\circ}$  \\
         & Scale $\pm10\%$ \\
         & Translation $\pm3$voxels \\
        
        \bottomrule
    \end{tabular}
\end{table}


\section{Baselines Comparison and Ablation Studies}

\subsection{Additional Multimodal Baselines}
To provide a comprehensive comparison, we implemented two additional multimodal Vision Transformer (ViT) baselines and compared them on the UKB age prediction and Asian WMH prediction tasks. These baselines are explicitly designed to utilize the exact same three NODDI maps and ViT backbones as BrainFIBRE, but with standard fusion strategies. This ensures that any observed superiority of our model can be strictly attributed to the proposed SPID pre-training paradigm and PID-inspired MoE framework, rather than merely the advantage of having rich multimodal inputs.

\begin{itemize}
    \item \textit{ViT-EF (Early Fusion)}: A 3-channel ViT-S architecture that accepts the channel-wise concatenation of the three NODDI maps as a single input.

    \item \textit{ViT-LF (Late Fusion)}: An architecture utilizing three independent ViT-S encoders to process each NODDI map separately, followed by embedding concatenation for modality fusion
\end{itemize}

To ensure a fair comparison, both baselines were assessed under two training paradigms: (1) self-supervised contrastive pre-training (without interactions) on the identical UKB dataset used for BrainFIBRE followed by downstream fine-tuning, and (2) fully supervised training from scratch directly on the target labeled datasets.

As reported in Table~\ref{tab:baselines}, BrainFIBRE consistently outperforms both standard fusion baselines across the UKB age prediction and Asian WMH prediction tasks. Although applying self-supervised pre-training to these baselines improves their efficacy over the supervised-from-scratch variants, their optimal performance remains significantly lower than that of BrainFIBRE.

\subsection{Ablation Studies on Loss Terms}
We conducted an ablation study to isolate the contribution of each individual loss term within our proposed SPID pre-training objective. Specifically, we pre-trained three variants of BrainFIBRE by systematically removing one loss component at a time: (1) w/o $\mathcal{L}_{Cont}$, which removes the global contrastive loss; (2) w/o $\mathcal{L}_{entropy}$, which drops the entropy regularization term; and (3) w/o $\mathcal{L}_{balance}$, which excludes the expert balancing loss. These variants were then fine-tuned and evaluated on the downstream tasks to quantify the necessity of each optimization constraint (Table~\ref{tab:ablation}).  We also conduct ablation on Synergy and/or Redundancy experts and interaction loss $\mathcal{L}_{Inter}$ (Table~\ref{tab:ablation_add}). Removing either part degrades performance, confirming their individual contributions.

\begin{table}[t]
    \centering
    \begin{threeparttable}
        \caption{Additional baselines comparisons on UKB and Asian datasets.}
        \label{tab:baselines}
        
        \fontsize{6.5pt}{7.5pt}\selectfont 
        \renewcommand{\arraystretch}{1.2}
        \setlength{\tabcolsep}{4pt}
        
        \begin{tabular}{l c c c c c}
            \toprule
            \multirow{2}{*}{\textbf{Model}} & 
            \multirow{2}{*}{\textbf{Pretrain}} & 
            \multicolumn{2}{c}{\textbf{UKB - Age}} & 
            \multicolumn{2}{c}{\textbf{Asian - WMH}} \\
            
            \cmidrule(lr){3-4} \cmidrule(lr){5-6}
            
            & & MAE & Corr & MAE & Corr \\
            \midrule
            
            \multirow{2}{*}{ViT-EF} & \xmark & 4.23\std{0.01} & 0.71\std{0.01} & 2.35\std{0.32} & 0.32\std{0.08} \\
            & \cmark & 4.15\std{0.19} & 0.72\std{0.04} & 2.45\std{0.28} & 0.38\std{0.02} \\
            
            \midrule
            
            \multirow{2}{*}{ViT-LF} & \xmark & 4.39\std{0.02} & 0.69\std{0.01} & 2.43\std{0.29} & 0.27\std{0.05} \\
            & \cmark & 4.19\std{0.02} & 0.71\std{0.01} & 2.31\std{0.38} &0.35\std{0.04} \\
            
            \midrule
            
            \textbf{BrainFIBRE} & \cmark & \textbf{3.95}\std{0.02} & \textbf{0.76}\std{0.00} & \textbf{2.08}\std{0.30} & \textbf{0.59}\std{0.02} \\
            
            \bottomrule
        \end{tabular}

        \begin{tablenotes}
            \tiny
            \item[] \hspace{-1.6em} ViT-EF: Early Fusion. ViT-LF: Late Fusion.
        \end{tablenotes}

    \end{threeparttable}
\end{table}

\begin{table}[t]
    \centering
    \begin{threeparttable}
        \caption{Ablation study of different loss components in BrainFIBRE.}
        \label{tab:ablation}
        
        \fontsize{6.5pt}{7.5pt}\selectfont 
        \renewcommand{\arraystretch}{1.2}
        \setlength{\tabcolsep}{4.5pt} 
        
        \begin{tabular}{c c c c c c c c}
            \toprule
            \multicolumn{4}{c}{\textbf{Loss}} & 
            \multicolumn{2}{c}{\textbf{UKB - Age}} & 
            \multicolumn{2}{c}{\textbf{Asian - WMH}} \\
            
            \cmidrule(lr){1-4} \cmidrule(lr){5-6} \cmidrule(lr){7-8}
            
            $\mathcal{L}_{Inter}$ & $\mathcal{L}_{Cont}$ & $\mathcal{L}_{balance}$ & $\mathcal{L}_{entropy}$ & MAE & Corr & MAE & Corr \\
            \midrule
            
            \cmark &        & \cmark & \cmark & 4.24\std{0.10} & 0.70\std{0.01}& 2.19\std{0.20} & 0.47\std{0.09}\\
            \cmark & \cmark &        & \cmark & 4.16\std{0.08} & 0.72\std{0.01} & 2.53\std{0.28} & 0.34\std{0.02} \\
            \cmark & \cmark & \cmark &        & 4.17\std{0.04}& 0.71\std{0.00}& 2.17\std{0.31}& 0.46\std{0.02}\\
            
            \cmark & \cmark & \cmark & \cmark & \textbf{3.95}\std{0.02} & \textbf{0.76}\std{0.00} & \textbf{2.08}\std{0.30} & \textbf{0.59}\std{0.02} \\
            
            \bottomrule
        \end{tabular}

        \begin{tablenotes}
            \tiny
            \item[] \hspace{-1.6em} ViT-EF: Early Fusion. ViT-LF: Late Fusion.
        \end{tablenotes}
        
    \end{threeparttable}
\end{table}

\begin{table}[htbp]
    \centering
    \caption{Ablation on Synergy and/or Redundancy experts and interaction loss $\mathcal{L}_{Inter}$}
    \label{tab:ablation_add}
    \resizebox{0.75\linewidth}{!}{%
        \begin{tabular}{lcccccc}
            \toprule
            & \multicolumn{1}{c}{\textbf{Ours}}
            & \multicolumn{3}{c}{\textbf{Expert Ablation}}
            & \multicolumn{1}{c}{\textbf{Loss Ablation}} \\
            \cmidrule(lr){2-2} \cmidrule(lr){3-5} \cmidrule(lr){6-6}
            \textbf{Metric} & BrainFIBRE & w/o $u_\text{syn}$ & w/o $u_\text{red}$ & w/o both & $-\mathcal{L}_\text{int}$ \\
            \midrule
            Corr. & \textbf{0.76}\std{0.00} & 0.66\std{0.02} & 0.68\std{0.02} & 0.67\std{0.01} & 0.71\std{0.00} \\
            MAE   & \textbf{3.95}\std{0.02} & 4.55\std{0.09} & 4.41\std{0.08} & 4.48\std{0.05} & 4.22\std{0.07} \\
            \bottomrule
        \end{tabular}%
    }
\end{table}


\section{Visualization and Interpretation}

\subsection{Task-Specific Modality Interaction Patterns} We present the task-specific modality interaction patterns for the HCP-Aging dataset. To quantitatively evaluate these interactions, we plot the distribution of expert weights across the test set for both the Flanker and Cardsort prediction tasks (Figure~\ref{fig:expert_weights}).

For the HCP-Aging Flanker prediction task, the expert weight distribution indicates that the free water fraction (FWF) provides the largest independent contribution, followed by neurite density (NDI) and orientation dispersion (ODI). In contrast, synergistic information contributes moderately, and redundancy remains relatively limited. This pattern suggests that extracellular water changes capture a substantial portion of the task-relevant signal, while neurite density and orientation provide complementary microstructural information

In the HCP-Aging Card Sort task, the weight distribution shows that synergistic information accounts for the largest proportion of the predictive signal, followed by redundancy, while the unique contributions from NDI, FWF, and ODI are comparatively minor. This suggests that performance on this task is primarily supported by integrated microstructural information across diffusion compartments rather than by any single microstructural feature.

\begin{figure}[t]
    \centering
    \includegraphics[width=1\linewidth]{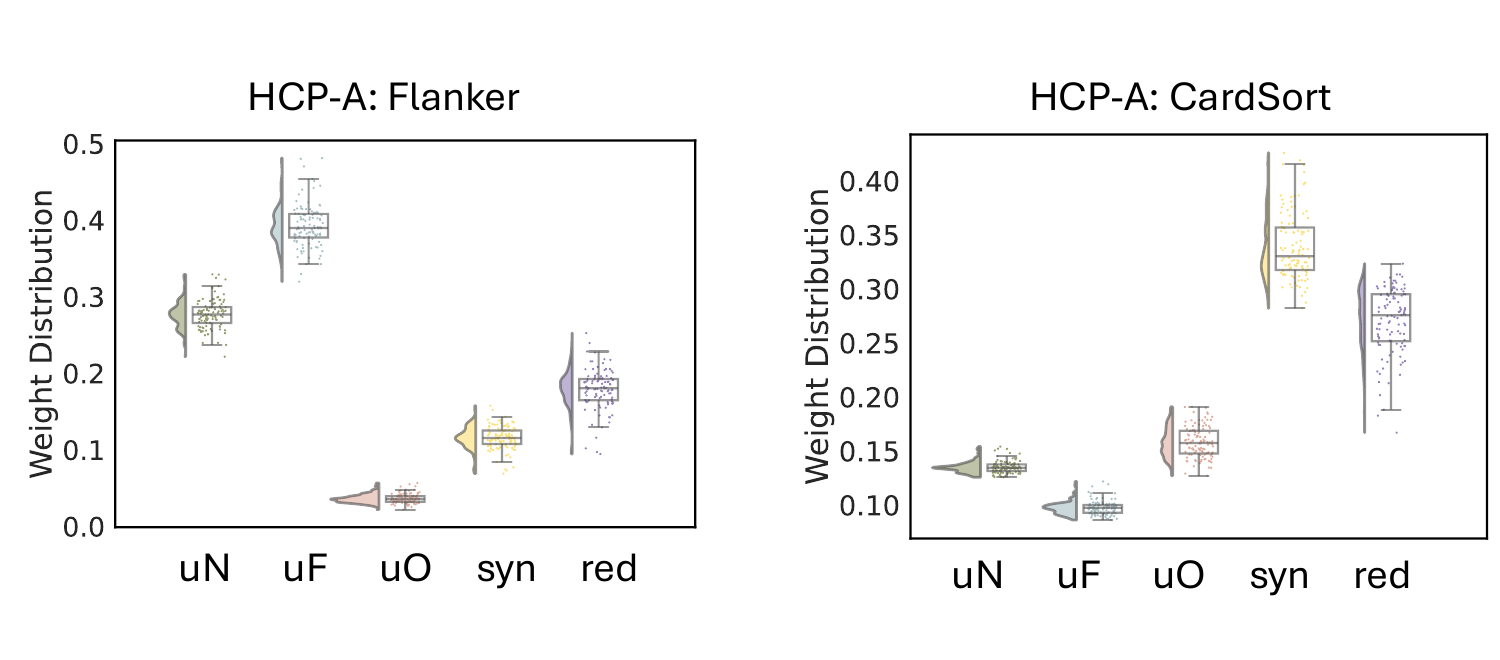}
    \caption{Distribution of expert weights across the test set.}
    \label{fig:expert_weights}
\end{figure}

\begin{figure}[t]
    \centering
    \includegraphics[width=1\linewidth]{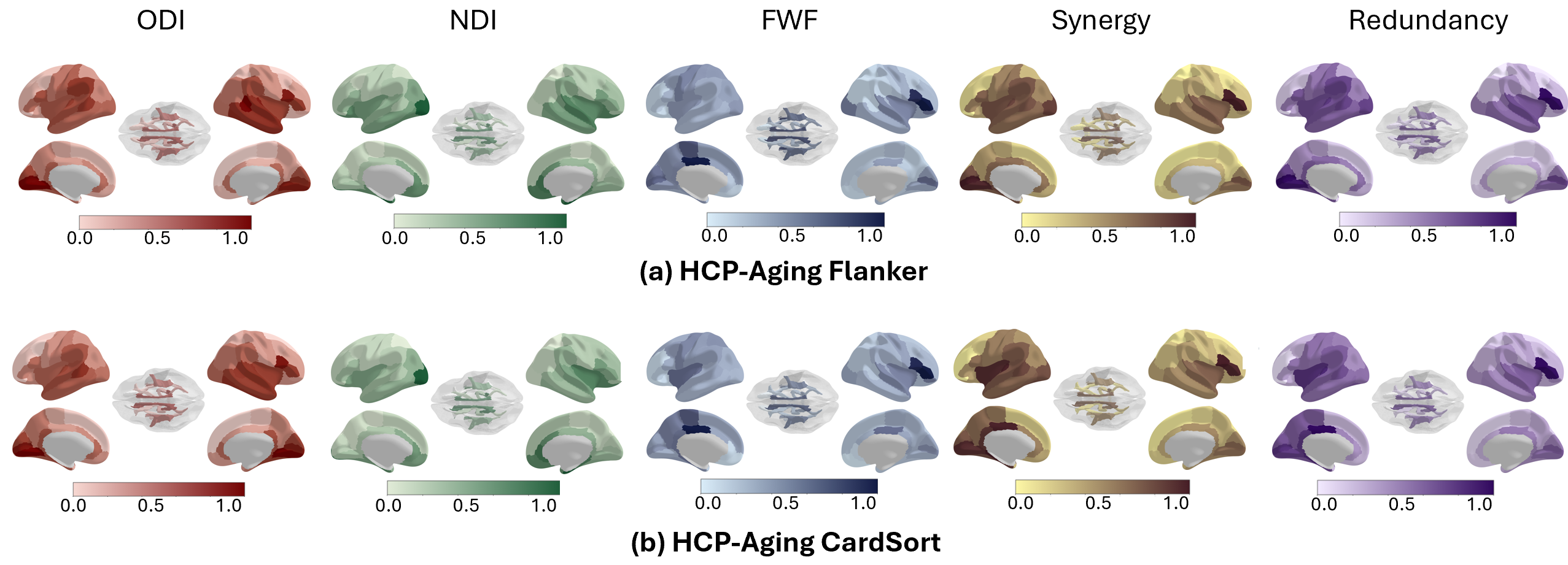}
    \caption{Spatial attention patterns captured by different interaction experts.}
    \label{fig:placeholder}
\end{figure}

\subsection{Salient Brain Regions Identified by Different Experts} In Section 4.5 of the main manuscript, we provided two examples of salient brain regions identified by different experts for the UKB and Asian age prediction tasks. Here, we present additional visualizations for the Flanker and Cardsort prediction tasks using the HCP-Aging dataset.

Spatial interpretations of the expert representations reveal that while the Flanker and Card Sort tasks rely on shared white matter infrastructural substrates, they exhibit distinct, task-specific microstructural dependencies at the cortical level. 

In white matter, performance on both tasks critically depends on the combined contributions of FWF, NDI, and ODI within long-range association pathways—specifically the cingulum bundle, superior longitudinal fasciculus, and uncinate fasciculus. These tracts structurally link frontal executive regions with parieto-temporal cortices, facilitating the distributed frontoparietal and limbic communication essential for overarching executive attention and cognitive flexibility. 

At the cortical level, NDI consistently highlights association and limbic regions (\emph{e.g.}, entorhinal cortex, temporal pole) crucial for cognitive integration across both paradigms. However, the tasks diverge significantly in their specific spatial sensitivities to FWF and ODI. For the Flanker task, which probes inhibitory control and selective attention, FWF uniqueness heavily dominates middle frontal and sensory (occipital/perisylvian) cortices, suggesting that extracellular properties strongly influence attentional filtering and sensory transmission. Conversely, during the Card Sort task—which assesses task switching—FWF contributions shift towards the cingulo-opercular system (posterior cingulate, insular, and inferior frontal regions), while ODI captures the variations in visual and perisylvian sensory regions. Together, these patterns demonstrate how distinct microstructural features dynamically support specialized executive demands atop a shared anatomical network.

\end{document}